\documentclass{article}

\usepackage[preprint]{neurips_2026}

\usepackage[utf8]{inputenc} %
\usepackage[T1]{fontenc}    %
\usepackage{hyperref}       %
\usepackage{url}            %
\usepackage{booktabs}       %
\usepackage{amsfonts}       %
\usepackage{nicefrac}       %
\usepackage{microtype}      %
\usepackage{xcolor}         %
\usepackage{wrapfig}

\title{Bayesian Preference Learning for\\Test-Time Steerable Reward Models}

\usepackage{amsmath}
\usepackage{amssymb}
\usepackage{mathtools}
\usepackage{amsthm}

\usepackage{hyperref}
\usepackage{url}
\usepackage{graphicx}
\usepackage{amsthm}
\usepackage{amssymb}
\usepackage{tcolorbox}
\usepackage{enumitem}
\usepackage{subcaption}
\usepackage{dsfont}
\usepackage{booktabs}
\usepackage{graphicx}
\usepackage{wrapfig}
\usepackage{longtable,array}
\usepackage{tabularray}

\usepackage[capitalize,noabbrev]{cleveref}

\theoremstyle{plain}
\newtheorem{theorem}{Theorem}[section]

\newtheorem{lemma}[theorem]{Lemma}

\theoremstyle{definition}

\theoremstyle{remark}

\newcommand{\ie}{\emph{i.e.,} }
\newcommand{\eg}{\emph{e.g.,} }

\author{%
  Jiwoo Hong$^1$\thanks{Equal contribution.}\quad Shao Tang$^2$\footnotemark[1]\quad Zhipeng Wang$^1$\vspace{0.1in} \\
  $^1$LinkedIn Corporation \quad $^2$Nubank \\
  \texttt{jiwoo\_hong@kaist.ac.kr, tang.shao@nubank.com.br} \\ 
}

\begin{document}

\maketitle

\begin{abstract}

    Reward models are central to aligning language models with human preferences via reinforcement learning (RL). As RL is increasingly applied to settings such as verifiable rewards and multi-objective alignment, RMs are expected to encode more complex and multifaceted preference distributions. However, classifier RMs remain static once trained, limiting their adaptability at test time. We propose \textbf{Variational In-Context Reward Modeling (ICRM)}, a novel Bayesian reward modeling objective that enables \emph{test-time steerability} via in-context preference demonstrations. ICRM casts reward modeling as amortized variational inference over a latent preference probability under the Bradley-Terry model using a conjugate Beta prior. We show that ICRM adapts to unseen preference distributions at test time for both single and multi-objective settings. With more demonstrations, ICRM improves RM-Bench accuracy from 60.5 to 70.8, achieves lower calibration error than a generative judge on moral dilemma preferences, and expands the attainable Pareto frontier under conflicting preferences. We further study the practical applicability of ICRM for RL training, showing that it can effectively encode verifiable rewards by outperforming a conventional RM in math reasoning. Finally, we provide theoretical guarantees that the variational objective admits a global interior optimum with finite confidence, and we analyze how KL regularization mitigates reward over-optimization.

\end{abstract}

\section{Introduction}

Reward models (RMs) serve as essential proxies for human preferences in language model post-training, including reinforcement learning with human feedback (RLHF) \citep{ziegler2020finetuning, ouyang2022training, stiennon2022learning}. Specifically, triplets comprising a prompt, a preferred response, and a dispreferred response are used to parameterize the preference distribution under the Bradley–Terry (BT) model \citep{btmodel}. Neural classifiers, \ie classifier RMs, act as estimators of the BT strength parameter, with theoretical guarantees that, given sufficient preference data, the learned RM can approximate the ``true'' human preference distribution \citep{bong2022generalized,rafailov2023direct}. This formulation enables the learned RM to act as a standalone proxy for a single concatenation of prompt and response, which is practically useful for RLHF training.

However, classifier RMs face two data-driven limitations: (1) they are \emph{static} once trained on a given dataset, and (2) they are prone to over-optimization \citep{gao2023scaling,hong2025on}. While LLM-as-a-Judge \citep{kim2024prometheus} offers flexible evaluation criteria with strong performance \citep{lambert_rewardbench_2024, malik2025rewardbench2advancingreward, liu2025rmbench}, these gains often rely on proprietary models \citep{comanici2025gemini25pushingfrontier,openai2024gpt4ocard}, implying substantial compute and data costs. Hence, it is desirable to design an efficient classifier RM that remains adaptable to unseen data while avoiding over-optimization by being test-time steerable.

In this paper, we introduce a \textbf{variational in-context reward modeling (ICRM)} framework grounded in a Bayesian view of preferences. Our method approximates the true preference distribution with a Beta posterior conditioned on in-context preference demonstrations. In detail, placing a Beta prior on the BT model yields a closed-form training loss via variational inference. This variational loss enables ICRM to learn preferences \emph{in-context} with few-shot demonstrations, allowing \textbf{test-time steerability of a classifier RM} that can dynamically adapt to one or more mixtures of arbitrary preferences, as in Figure \ref{fig:main}. Furthermore, we prove that a KL penalty to the Beta prior tempers the learned preference mean and yields a global interior optimum. Our main contributions are summarized below:
\begin{enumerate}[left=1pt,parsep=1pt]
    \item \textbf{Principled variational preference learning}: We propose a reward modeling objective that enables RMs to infer test-time preferences from in-context preference samples. We prove that regularizing the Beta posterior with a uniform Beta prior guarantees a global interior optimum, tempering excessive maximization of the preference mean on training data.

    \item \textbf{Single and multi-objective test-time steerability}: We show that the variational design yields both directional and calibrated steerability, and extends to multi-objective preference trade-offs. With an increasing number of demonstrations, ICRM improves RM-Bench accuracy by over 10\% point and increases hypervolume by 4\% on conflicting RM-Bench Safety subsets.

    \item \textbf{Parameterizing verifiable rewards with ICRM}: We apply ICRM to reinforcement learning with verifiable rewards (RLVR), where it outperforms both the verifier and the Bradley-Terry reward model on math reasoning, achieving over 2.5\% point accuracy compared to the gold verifier.
\end{enumerate}

\begin{figure*}[t!]
    \centering
    \includegraphics[width=\textwidth]{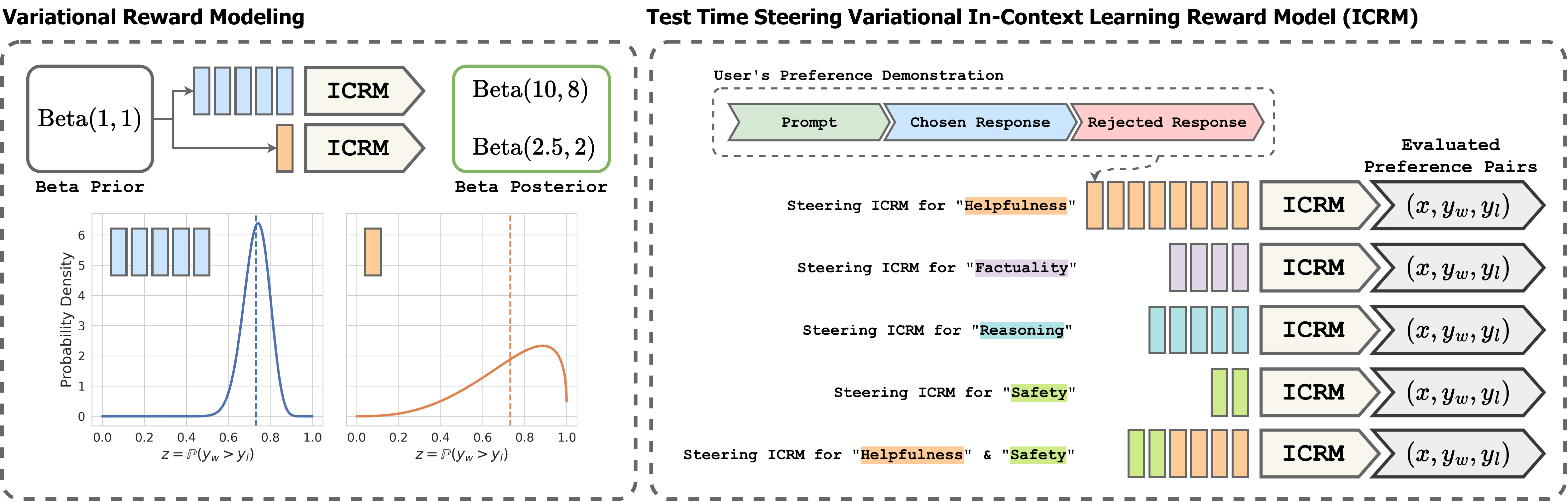}
    \caption{Variational in-context reward modeling (\textbf{ICRM}) with Beta prior for the Bradley-Terry (BT) model. ICRM directly encodes the mean and sharpness of the Beta posterior given in-context preference demonstrations, allowing \textbf{multi-objective test-time steerability} for \emph{any} unseen preferences.}
    
    \label{fig:main}
\end{figure*}

\section{Background}\label{sec:background}

\paragraph{Preliminaries.} A classifier reward model (RM), $r_\theta(x, y)$, is a function parameterized by $\theta$ that outputs a score indicating the quality of a prompt-response pair $(x, y)$ \citep{ziegler2020finetuning}:
\begin{equation}
    r_\theta(x, y) = W_p^\top h_\theta(x, y) \in \mathbb{R},\label{eq:rm}
\end{equation}
where $W_p \in \mathbb{R}^{d_\mathrm{model} \times 1}$ is a projection head initialized by $\mathcal{N}(0, (d_\mathrm{model}+1)^{-1})$ \citep{stiennon2022learning,huang2024the,hong2025on} and $h_\theta(x, y) \in \mathbb{R}^{d_\mathrm{model} \times 1}$ is the last hidden state from the backbone language model. These models are typically trained on a dataset of human preferences, $\mathcal{D} = \{ (x_i, y_{i,w}, y_{i,l}) \}_{i=1}^N$, where $y_{i,w}$ is the preferred (``chosen'') response and $y_{i,l}$ is the dispreferred (``rejected'') response for a given prompt $x_i$. The training objective maximizes the log-likelihood of the preferences according to the Bradley-Terry (BT) model \citep{btmodel},
\begin{equation}
    P(y_w \succ y_l \mid x) = \sigma(r_\theta(x, y_w) - r_\theta(x, y_l)) = \frac{\exp(r_\theta(x, y_w))}{\exp(r_\theta(x, y_w)) + \exp(r_\theta(x, y_l))},
\end{equation}
which posits that the probability of $y_w$ being preferred over $y_l$ is given by a logistic function of the difference in their reward scores. The final loss function $\mathcal{L}_\mathrm{BT}(\theta)$ is defined as:
\begin{equation}
    \mathcal{L}_\mathrm{BT}(\theta) = - \left( \log \sigma(r_\theta(x, y_w) - r_\theta(x, y_l)) \right).
\end{equation}
Once the preference distribution shown in the training set is encoded into $\theta$ via fine-tuning, it cannot be adaptively updated at test time without additional training, limiting the flexibility of RMs.

\paragraph{Estimating the true preference distribution.}
Prior work in offline preference learning supports that, with sufficient pairwise comparisons, fitted models recover underlying preferences \citep{rafailov2023direct, hejna2024contrastive}. In the classical BT setting, the maximum-likelihood estimator (MLE) exists and enjoys consistency and asymptotic normality. For any context \(x\) and pair \((y_w,y_l)\), if $\hat{P}$ is the probability estimated by the MLE and \(P^{*}\) the true probability, then
$\hat{P}(y_w \succ y_l \mid x) \xrightarrow{p} P^*(y_w \succ y_l \mid x).$ Thus, with sufficient data, a learned BT model converges to the \emph{true} preference distribution. We study the parameterization of the BT model and its applications in Appendix \ref{apdx:related_works}.

\paragraph{Bayesian treatment of the Bradley-Terry model.} Bayesian treatment of the BT model necessitates the selection of a suitable prior distribution for parameters \citep{chen1984bayes,whelan2017prior,wainer2023bayesian,fageot2024generalized}. The general form of the model with $N > 2$ contenders is parameterized by a vector of $N$ strength scores $\beta = (\beta_1, \ldots, \beta_N) \in \mathbb{R}^N$. Typically, the Bayesian formulation in those cases defines the prior distributions directly on each strength parameter: \eg Gaussian prior \citep{wainer2023bayesian} and a Dirichlet prior \citep{chen1984bayes}.

\section{Variational In-Context Reward Modeling}\label{sec:variational}

We present a novel Bayesian reward modeling objective by framing in-context reward modeling as a problem of amortized variational inference. The central idea is to approximate the true preference distribution with a Beta posterior conditioned on in-context preference demonstrations. %

\subsection{Problem Setup}\label{subsec:setup}
\paragraph{Prior.} We introduce a latent random variable $z$ represents the probability of $y_w$ being preferred over $y_l$ given prompt $x$ and demonstrations $\mathcal{C}$, \ie $z \coloneqq P(y_w \succ y_l \mid x,  \mathcal{C}) \in [0,1]$.
This captures the \emph{preference standard} specific to the pair $(y_w,y_l)$ under context $\mathcal{C}$ and prompt $x$. We assume there exists a true but intractable \emph{context-dependent} prior, $p(z \mid x, y_w, y_l, \mathcal{C})$, that reflects implicit preference functions learned in context. Conditioned on $z$, the likelihood of the observed outcome $o \in \{0,1\}$, $p(o \mid z, x, y_w, y_l, \mathcal{C})$, belongs to the Bernoulli family.

\paragraph{Posterior.} By Bayes' rule, the \emph{true posterior} over $z$ after observing $o$, \ie our inferential target is:
\begin{equation}
p(z \mid o, \cdot)
    \propto p(o \mid z, \cdot) \times p(z \mid \cdot),
\label{eq:true_posterior}
\end{equation}
where $(x, y_w, y_l, \mathcal{C})$ is omitted. However, this is intractable as the context-dependent prior $p(z \mid x, y_w, y_l, \mathcal{C})$ lacks a simple analytical form due to the complex dynamics of in-context learning. Throughout, we focus on $o = \mathds{1}_{y_w \succ y_l}$. Thus, we approximate the posterior distribution through $q_\theta(z \mid o = \mathds{1}_{y_w \succ y_l}, x, y_w, y_l,  \mathcal{C})$, which is denoted as $q_\theta(z \mid x, y_w, y_l, \mathcal{C})$ for notational brevity.

\subsection{Reward Modeling as Variational Inference}\label{subsec:icrm}

We parameterize $q_\theta(z \mid x, y_w, y_l,  \mathcal{C})$ with model $\theta$, which directly maps the inputs to the parameters of an approximate posterior distribution, namely the variational in-context reward modeling (ICRM). We outline the choice of the prior distribution and propose the final learning objective for ICRM.

\paragraph{Beta prior for the Bradley-Terry model.} Building on the discussion of the Bayesian treatment of the BT model, we propose using a Beta prior for the BT model in reward modeling. The setting for reinforcement learning from human feedback (RLHF) typically involves a \emph{single} pairwise comparison $(y_w, y_l)$ given the prompt $x$ \citep{wang2024helpsteer,liu2025skywork}. This specialization to $N = 2$ significantly reduces the problem's complexity, \ie likelihood of observing preference outcomes for this pair follows a Bernoulli distribution parameterized by $z$. 
For a Bernoulli likelihood, the conjugate prior for the parameter is the Beta distribution: $\mathrm{Beta}\left(\alpha_0, \beta_0\right)$, where $(\alpha_0, \beta_0)$ encodes our initial belief about the preference before observing any data.

\paragraph{Amortized variational approximation of posterior.} 
Given the Beta prior, we approximate the posterior distribution $q_\theta(z \mid x, y_w, y_l, \mathcal{C})$ using a reward model with a two-dimensional projection head $W_p \in \mathbb{R}^{d_\mathrm{model} \times 2}$, returning a \textit{utility} score $u_\theta(x,y, \mathcal{C})$ and a \textit{confidence} (\ie evidence) score $s_\theta(x,y, \mathcal{C})$, which are context dependent. For $(x, y_w,y_l, \mathcal{C})$, we have both scores, each for the chosen and rejected responses $y_w$ and $y_l$, shortened as $u_w, u_l, s_w$, and $s_l$. We reparameterize the Beta posterior $\mathrm{Beta}(\alpha_q, \beta_q)$ with $\alpha_q = \mu \tau$ and $\beta_q = (1-\mu)\tau$, where
\begin{equation}
\mu = \sigma(u_w - u_l)~~\text{and}~~\tau = \mathrm{Softplus}(s_w) + \mathrm{Softplus}(s_l) + 1,
\label{eq:variational_parameters}
\end{equation}
with $\mathrm{Softplus}(x) = \log (1 + \exp(x))$. Here $\mu\in(0,1)$ is the posterior predictive probability and $\tau>0$ controls concentration. The approximate posterior is
$q_\theta(z\mid x,y_w,y_l,\mathcal{C})
=\mathrm{Beta}\bigl(z;\alpha_q,\beta_q\bigr)$, with $\alpha_q > 0$ and $\beta_q>0$.
This construction preserves the BT model as a special case: the posterior mean of $q_\theta$ recovers the BT preference probability, $\mathbb{E}_{q_\theta}[z] = \alpha_q/(\alpha_q+\beta_q)= \mu = \sigma(u_w - u_l),$ while the concentration $\tau$ reflects the amount of evidence.

\paragraph{Evidence lower bound for variational objective.} Since the true posterior $p(z \mid x, y_w, y_l, \mathcal{C})$ is intractable as described in Section \ref{subsec:setup}, we formulate the inference task as an optimization problem using variational inference to approximate the true posterior with the reward model $r_\theta$. 
Inspired by \citet{joo2020being}, we train the model $\theta$ by maximizing the Evidence Lower Bound (ELBO) for the observed preference $y_w \succ y_l$. The loss is the negative ELBO:
\begin{equation}
      \mathcal{L}_{\mathrm{ELBO}}(\theta)= -\underbrace{\mathbb{E}_{q_\theta(z \mid x, y_w, y_l, \mathcal{C})}\left[\log z\right]}_{\text{Reconstruction Error}} +
  \lambda(N)\times \underbrace{\mathbb{D}_\mathrm{KL}\left(q_\theta \left(z \mid \cdot \right) \,\mid\mid\, p\left(z \mid \cdot\right)\right)}_{\text{Regularization Term}},
\label{eq:elbo}
\end{equation}
where $\cdot$ in the regularization term omits $(x, y_w, y_l, \mathcal{C})$.
The first term in \eqref{eq:elbo} represents the \textit{reconstruction error}, measuring how well the approximate posterior explains the observed outcome $y_w \succ y_l$. For a Beta distribution, this expectation has a known closed form solution involving the digamma function, $\psi(x) := d \log \Gamma (x)/ dx $:
\begin{equation}
 \mathbb{E}_{q_\theta(z \mid x, y_w, y_l, \mathcal{C})}\left[\log z\right] = \psi(\alpha_q) - \psi(\alpha_q + \beta_q)=\psi(\mu\tau) - \psi(\tau). 
\end{equation}
Minimizing this term increases $\mu$ toward 1, favoring $y_w$, analogous to the standard BT loss \citep{azar2024general}. Meantime, $\tau$ controls how sharply the distribution concentrates around this preference.

The second term in \eqref{eq:elbo} is the Kullback-Leibler (KL) divergence from the model's approximate posterior $q_\theta = \mathrm{Beta}(\mu\tau, (1-\mu) \tau)$ to the prior $p$. As the true prior $p(z \mid x, y_w, y_l, \mathcal{C})$ is intractable, we replace it with a fixed, uninformative prior $p(z) = \mathrm{Beta}(z; \alpha_0, \beta_0)$, \eg a uniform prior with $\alpha_0=\beta_0=1$. And $\lambda(N)$ is a monotonically decreasing schedule
that down-weights the KL term as the amount of contextual evidence $N$ grows. This term regularizes the approximation, preventing the posterior from deviating excessively from the prior, especially when contextual evidence is minimal, \eg $N$ is small. The KL divergence between two Beta distributions, $p = \mathrm{Beta}(\alpha_p, \beta_p)$ and $q = \mathrm{Beta}(\alpha_q, \beta_q)$, has a closed form solution \citep{wallach2019continuous}:
\begin{equation}
\begin{split}
    \mathbb{D}_\mathrm{KL}(q \,\mid\mid\, p) &= \log\frac{\Gamma(\alpha_q+\beta_q)}{\Gamma(\alpha_q)\Gamma(\beta_q)} - \log\frac{\Gamma(\alpha_p+\beta_p)}{\Gamma(\alpha_p)\Gamma(\beta_p)} + (\alpha_q - \alpha_p)[\psi(\alpha_q) - \psi(\alpha_q+\beta_q)] \\
    &~~ + (\beta_q - \beta_p)[\psi(\beta_q) - \psi(\alpha_q+\beta_q)],
\end{split}
\label{eq:kl_beta}
\end{equation}
Finally, the dynamic hyperparameter $\lambda(N)$ controls this balance: when the context is minimal ($N=1$), a large $\lambda(1)$ forces the posterior to remain close to the uninformative prior, \ie high uncertainty. As more examples are added to the context, $\lambda(N)$ decreases, allowing the reconstruction term to dominate and the model to form a more confident, data-driven posterior distribution. Combining these components, the fully-specified loss is defined as:
\begin{equation}
\mathcal{L}_{\text{ICRM}}(\mu, \tau; \alpha_0, \beta_0) = - \left( \psi(\mu \tau) - \psi(\tau) \right)+ \lambda(N) \mathbb{D}_\mathrm{KL}\left(\mathrm{Beta}(\mu\tau, (1-\mu)\tau) \| \mathrm{Beta}(\alpha_0, \beta_0)\right),
\label{eq:final_icrm_loss_expanded}
\end{equation}
where $\mu, \tau$ are functions of $\theta$ and $\lambda(N) = \lambda \times N^{-1}$ with predefined $\lambda$. For notational convenience, we henceforth write $\mathcal{L}_{\text{ICRM}}(\mu, \tau)$.

\paragraph{Choice of uniform Beta prior for the divergence penalty.} As in \eqref{eq:final_icrm_loss_expanded}, the divergence penalty can be controlled with the predefined prior distribution $p = \mathrm{Beta}(\alpha_0, \beta_0)$. If we have explicitly collected annotations for the pair $(x, y_w, y_l)$ for given few-shot examples $\mathcal{C}$, we may set unique $(\alpha_0, \beta_0)$. However, it is typically hard to collect such data. Thus, we assume $(\alpha_0, \beta_0) = (1, 1)$, implying the uniform distribution on preferring $y_w$ over $y_l$ without any information. Potentially, synthetic personas \citep{singh2025fspo} or voting over multiple preference models \citep{yang2024voting} can be used to generate such data to provide a more informative prior.

\subsection{Analysis}\label{sec:analysis}

One common failure mode of the Bradley-Terry (BT) reward model is \emph{over-optimization} \citep{gao2023scaling}, in which the preference probabilities converge to $1$ and fit into the local optima of the \emph{true} human preference distribution \citep{azar2024general,hong2025on}. The proposed KL-regularized variational objective directly addresses this issue, \ie it precludes boundary minima—ensuring an interior optimum—and, via the same KL term, imposes a quantitative edge-behavior barrier that moderates the excessive growth of the score margin at high preference probabilities.

\begin{lemma}[Edge behavior at finite confidence]\label{lemma:edge_behavior}
Let \(P_\theta(y_w \succ y_l \mid x)\) denote the ICRM preference with \(\mu=\sigma(\Delta u_\theta)=\sigma(u_\theta(x,y_w)-u_\theta(x,y_l))\) and \(\varepsilon:=1-\mu\).
For \(\tau\in(0,\infty)\),
as \(\varepsilon\to 0^+\),
\begin{equation*}
\nabla_\theta \mathcal{L}_\mathrm{ICRM}
\rightarrow
\underbrace{\left(\tfrac{\lambda \beta_0}{\varepsilon \tau}\right)}_{\text{\normalfont Utility Coefficient}}\mkern-20mu\nabla_\theta \Delta u_\theta
+ \mkern-20mu \underbrace{\left(-\tfrac{\lambda \beta_0}{\varepsilon \tau^2}\right)}_{\text{\normalfont Confidence Coefficient}}\mkern-25mu\nabla_\theta \tau.
\end{equation*}

\end{lemma}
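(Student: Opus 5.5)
The plan is a direct chain-rule computation followed by a singular expansion in $\varepsilon$. Write $b:=\varepsilon\tau=(1-\mu)\tau$ and $a:=\mu\tau$. The parameters $\theta$ enter $\mathcal{L}_\mathrm{ICRM}$ only through $(\mu,\tau)$, and $\mu=\sigma(\Delta u_\theta)$ gives $\nabla_\theta\mu=\mu(1-\mu)\nabla_\theta\Delta u_\theta=\mu\varepsilon\,\nabla_\theta\Delta u_\theta$. Hence
\[
\nabla_\theta\mathcal{L}_\mathrm{ICRM}=\mu\varepsilon\,\partial_\mu\mathcal{L}_\mathrm{ICRM}\,\nabla_\theta\Delta u_\theta+\partial_\tau\mathcal{L}_\mathrm{ICRM}\,\nabla_\theta\tau .
\]
It therefore suffices to find the leading-order behavior of $\mu\varepsilon\,\partial_\mu\mathcal{L}_\mathrm{ICRM}$ and of $\partial_\tau\mathcal{L}_\mathrm{ICRM}$ as $\varepsilon\to0^+$ with $\tau\in(0,\infty)$ fixed. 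Throughout, $\lambda(N)$ is written as $\lambda$, since $N$ is fixed in this limit.

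\textbf{Step 1: the reconstruction term.} The term $-(\psi(\mu\tau)-\psi(\tau))$ is smooth in $(\mu,\tau)$ near $\mu=1$ for fixed $\tau>0$, because $\psi$ is analytic on $(0,\infty)$ and $\mu\tau\to\tau>0$. Its partial derivatives therefore stay bounded. After multiplication by $\mu\varepsilon$, its contribution to the utility coefficient is $O(\varepsilon)$, and its contribution to the confidence coefficient is $O(1)$. Both are negligible against the $\Theta(1/\varepsilon)$ terms found below.

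\textbf{Step 2: isolating the singular part of the KL term.} Specialize \eqref{eq:kl_beta} to $\alpha_q=a$, $\beta_q=b$:
\[
\mathbb{D}_\mathrm{KL}=\log\Gamma(\tau)-\log\Gamma(a)-\log\Gamma(b)-\log B(\alpha_0,\beta_0)+(a-\alpha_0)[\psi(a)-\psi(\tau)]+(b-\beta_0)[\psi(b)-\psi(\tau)].
\]
Every term is regular as $b\to0$ except those involving $\log\Gamma(b)$ and $\psi(b)$. Use the expansions $\log\Gamma(b)=-\log b-\gamma b+O(b^2)$ and $\psi(b)=-1/b-\gamma+O(b)$. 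These give
\[
-\log\Gamma(b)+(b-\beta_0)\psi(b)=\frac{\beta_0}{b}+\log b+R(b),
\]
where $R$ is smooth at $0$. Consequently
\[
\mathbb{D}_\mathrm{KL}=\frac{\beta_0}{\varepsilon\tau}+\log(\varepsilon\tau)+G(\mu,\tau),
\]
with $G$ having bounded first derivatives near $\mu=1$.

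\textbf{Step 3: differentiating.} Since $\partial_\mu=-\partial_\varepsilon$,
\[
\partial_\mu\mathbb{D}_\mathrm{KL}=\frac{\beta_0}{\varepsilon^2\tau}-\frac1\varepsilon+O(1).
\]
Multiplying by $\lambda\mu\varepsilon$, and using $\mu\to1$, gives the utility coefficient $\frac{\lambda\beta_0}{\varepsilon\tau}+O(1)$. Similarly,
\[
\partial_\tau\mathbb{D}_\mathrm{KL}=-\frac{\beta_0}{\varepsilon\tau^2}+\frac1\tau+O(1),
\]
so the confidence coefficient is $-\frac{\lambda\beta_0}{\varepsilon\tau^2}+O(1)$. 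Keeping only the leading $\Theta(1/\varepsilon)$ terms yields the stated expression, where "$\rightarrow$" is read as leading-order asymptotic equivalence.

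\textbf{Main obstacle.} The delicate step is Step 2: the remainders must genuinely be lower order \emph{after differentiation}, not merely in value. This requires using the derivative expansions $\psi'(b)=1/b^2+O(1)$ for the $\psi(b)$ terms, rather than differentiating the asymptotic series termwise without justification. The remaining $\mu$-dependence through $a=\mu\tau$ must also be checked to stay regular, which holds because $a\to\tau>0$.
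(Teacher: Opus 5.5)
Your proposal is correct and follows essentially the paper's route: the chain rule with $\nabla_\theta\mu=\mu\varepsilon\,\nabla_\theta\Delta u_\theta$, then isolating the pole produced by $\beta=\varepsilon\tau\to0^+$, which gives the coefficients $\lambda\beta_0/(\varepsilon\tau)+O(1)$ and $-\lambda\beta_0/(\varepsilon\tau^2)+O(1)$ exactly as the paper finds. The only difference is the order of operations (you expand the KL value as $\beta_0/(\varepsilon\tau)+\log(\varepsilon\tau)+G$ and then differentiate, whereas the paper expands its closed-form trigamma gradients via $\psi_1(\varepsilon\tau)=(\varepsilon\tau)^{-2}+O(1)$), and the obstacle you flag disappears once you use the exact identities $\log\Gamma(b)=\log\Gamma(1+b)-\log b$ and $\psi(b)=\psi(1+b)-1/b$, which make $G$ analytic near $\mu=1$ (incidentally, the prior constant in your KL should be $+\log B(\alpha_0,\beta_0)$, which is irrelevant for the gradient).
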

Here, $\Delta u_\theta$ is regularized by $\tau$.
As training increases $\mu$, $\lambda\beta_0 / (\tau \varepsilon)$ in the utility coefficient increases for any finite $\tau$, thereby penalizing further growth of the utility and preventing uncontrolled maximization of $\mu$ when taking a gradient descent step (see Appendices \ref{sec:icrm-gradient} and \ref{sec: proof of lemma edge behavior} for proof). Since the Lemma \ref{lemma:edge_behavior} is for finite \(\tau\), we next prove that the global minimizer indeed has \(0<\tau^\star<\infty\).
\begin{theorem}\label{thm:icrm-interior}
Assume \(\lambda>0\) and \(\alpha_0,\beta_0>0\).
Every global minimizer \((\mu^\star,\tau^\star)\) of \(\mathcal{L}_{\text{ICRM}}(\mu,\tau;\alpha_0,\beta_0)\) defined in \eqref{eq:final_icrm_loss_expanded} satisfies
\[
0<\mu^\star<1
\qquad\text{and}\qquad
0<\tau^\star<\infty.
\]
\end{theorem}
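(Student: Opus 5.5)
The plan is to reparametrize by the Beta parameters \(a=\mu\tau\) and \(b=(1-\mu)\tau\), so that \((\mu,\tau)\in(0,1)\times(0,\infty)\) corresponds bijectively and smoothly to \((a,b)\in(0,\infty)^2\), with \(\tau=a+b\). We then show that \(\mathcal{L}_{\text{ICRM}}\) is \emph{coercive}: it tends to \(+\infty\) along every sequence leaving every compact subset of the open domain. Continuity on the open domain then gives two things. First, a global minimizer exists, since sublevel sets are compact. Second, no minimizing sequence can escape to the boundary, so every global minimizer satisfies \(0<\mu^\star<1\) and \(0<\tau^\star<\infty\).

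Both pieces of the loss are nonnegative. The reconstruction term \(-(\psi(a)-\psi(a+b))=-\mathbb{E}_q[\log z]\ge 0\) and \(\mathbb{D}_\mathrm{KL}\ge 0\). Hence it suffices to show that, near each boundary piece, at least one term blows up while the other stays bounded below (it is \(\ge 0\)). Using \eqref{eq:kl_beta} with \((\alpha_p,\beta_p)=(\alpha_0,\beta_0)\), the KL equals
\[
\log\Gamma(a+b)-\log\Gamma(a)-\log\Gamma(b)+(a-\alpha_0)[\psi(a)-\psi(a+b)]+(b-\beta_0)[\psi(b)-\psi(a+b)]+C_0 .
\]
Any sequence leaving compacts of \((0,\infty)^2\) has a subsequence in one of three regimes.

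\textbf{(i)} \(b\to 0\), which covers \(\mu\to 1\) at finite \(\tau\) and part of \(\tau\to 0\). Here \(\psi(b)\sim -1/b\) while \(-\log\Gamma(b)\sim\log b\), so the term \(-\beta_0\psi(b)\sim \beta_0/b\) dominates and drives \(\lambda\mathbb{D}_\mathrm{KL}\to+\infty\). This is exactly the mechanism of Lemma~\ref{lemma:edge_behavior}.

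\textbf{(ii)} \(a\to 0\), which covers \(\mu\to 0\). Here \(-\psi(a)\sim 1/a\to\infty\) in the reconstruction term, and the KL gives \(\alpha_0/a\to\infty\) as well.

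\textbf{(iii)} \(a,b\) bounded away from \(0\) and \(a+b\to\infty\). Here we use Stirling's formula and \(\psi(x)=\log x-\tfrac{1}{2x}+O(x^{-2})\) to get
\[
\mathbb{D}_\mathrm{KL}=\tfrac12\log\tfrac{ab}{a+b}+(\alpha_0-1)\log\tfrac{a+b}{a}+(\beta_0-1)\log\tfrac{a+b}{b}+O(1).
\]
This is the usual ``\(-\)entropy of a concentrating Beta'' divergence.

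The main obstacle is regime (iii) and mixed limits, where \(\tau\to\infty\) while one of \(a,b\) is bounded, or while \(b\to0\) simultaneously. In those cases the \((\alpha_0-1)\) and \((\beta_0-1)\) terms can have either sign when \(\alpha_0,\beta_0<1\), so they could in principle cancel the \(\tfrac12\log\) growth. To handle this uniformly, I would first fix a small \(c>0\) and treat \(\min(a,b)\le c\) by the \(1/a\) and \(1/b\) singularities. Those singularities dominate every logarithmic term, uniformly in \(\tau\), because \(\psi(x)+1/x\) is bounded on \((0,c]\) and \(\psi(a+b)\) grows only logarithmically.

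On the region \(\min(a,b)\ge c\), suppose \(\max(a,b)\to\infty\) while \(\min(a,b)\) stays bounded, say \(a\to\infty\) with \(b\) bounded. Then the reconstruction term stays bounded below by \(0\). The KL picks up \(-\log\Gamma(b)+(b-\beta_0)[\psi(b)-\psi(a+b)]\approx (\beta_0-b)\log a\), and this must be combined with the \(\log\Gamma(a+b)-\log\Gamma(a)\approx b\log a\) term. The net coefficient is \(\beta_0>0\), giving growth \(\beta_0\log a\to\infty\). Verifying this coefficient carefully, and the symmetric case giving \(\alpha_0\log b\), is the one genuinely delicate computation. When both \(a,b\to\infty\), the \(\tfrac12\log\) term together with the nonnegative \(\alpha_0,\beta_0\) contributions gives divergence. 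Having established coercivity in all regimes, existence of a minimizer and interiority of every global minimizer follow immediately.
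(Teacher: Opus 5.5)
Your strategy is the paper's: show that $\mathcal{L}_{\text{ICRM}}$ blows up along every sequence approaching the boundary, by a case analysis using $\lambda>0$ and the nonnegativity of both terms, then conclude by compact sublevel sets and Weierstrass. Working in $(a,b)=(\mu\tau,(1-\mu)\tau)$ instead of $(\mu,\tau)$ is cosmetic. Two steps, however, fail as written.

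First, near $a=b=0$ you treat $\psi(a+b)$ as harmless (``$\psi(a+b)$ grows only logarithmically''). But there $\psi(a+b)=-\tfrac{1}{a+b}+O(1)$ cancels the leading singularity: $\psi(a+b)-\psi(a)=\tfrac{b}{a(a+b)}+O(1)$ and $\psi(a+b)-\psi(b)=\tfrac{a}{b(a+b)}+O(1)$. Take $b=a^{3}$ with $a\to0$, a sequence in your regime (ii). Both the reconstruction term and the KL contribution $-\alpha_0[\psi(a)-\psi(a+b)]$ tend to $0$, so neither mechanism you cite for (ii) applies. Symmetrically, $a=b^{3}$ defeats the mechanism you cite for (i).

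The repair is to pass to a subsequence with $a_n\le b_n$ or $b_n\le a_n$ and use the smaller parameter. If $a_n\le b_n$, the reconstruction term is at least $\psi(2a_n)-\psi(a_n)=\tfrac{1}{2a_n}+O(1)$. If $b_n\le a_n$, the KL is at least $\tfrac{\beta_0}{2b_n}+\log\tfrac{b_n}{2}+O(1)$: the $\log\Gamma$ terms contribute $\log\tfrac{ab}{a+b}+O(1)$ and the remaining terms are bounded below. This cancellation is exactly what the paper's Case (A) tracks, via $\psi(\tau)-\psi(\alpha)=\tfrac{1-\mu}{\mu\tau}+O(1)$ and, for $\mu\to1$, the KL term $\beta_0(\tfrac1\beta-\tfrac1\tau)$.

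Second, your regime (iii) expansion is incorrect. Stirling together with $\psi(x)=\log x-\tfrac{1}{2x}+O(x^{-2})$ gives, uniformly on $\{a,b\ge c\}$, $\mathbb{D}_\mathrm{KL}=\tfrac12\log\tfrac{ab}{a+b}+\alpha_0\log\tfrac{a+b}{a}+\beta_0\log\tfrac{a+b}{b}+O(1)$. The coefficients $\alpha_0-1,\beta_0-1$ belong with the leading term $\tfrac12\log\tfrac{(a+b)^3}{ab}$ instead. Your version is off by $\log\tfrac{(a+b)^2}{ab}$, which is unbounded when $a/b\to0$ or $\infty$. For example, with $\beta_0=\tfrac14$, $a=n^2$, $b=n$, your formula tends to $-\infty$ while the true KL grows like $\tfrac34\log n$.

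With the correct formula your ``obstacle'' disappears. The last two terms are nonnegative and $\tfrac{ab}{a+b}\ge\tfrac12\min(a,b)$, so the KL diverges when both $a,b\to\infty$. If instead $b$ stays in a compact subset of $(0,\infty)$ while $a\to\infty$, the term $\beta_0\log\tfrac{a+b}{b}$ gives precisely the $\beta_0\log a$ you anticipated (the paper's $\alpha_0\log\tau_n$ in Case (B2)). Your closing appeal to ``nonnegative $\alpha_0,\beta_0$ contributions'' is valid only for this corrected formula and contradicts the one you displayed.

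Relatedly, splitting at a fixed $c$ misses sequences with $\min(a,b)$ bounded away from $0$ but below $c$ while $\max(a,b)\to\infty$. Extracting a subsequence with $(a_n,b_n)\to(a^*,b^*)\in[0,\infty]^2$, and using the uniform formula on $\{a,b\ge c'\}$ for any $c'>0$, fixes this.

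With these repairs the proof is complete. Your use of the KL's $b^{-1}$ singularity for $\mu\to1$ is in fact more careful than the paper. Its Cases (B2) and (C) handle $\mu\to1$ ``by symmetry'', although the reconstruction term is not symmetric under $\mu\leftrightarrow1-\mu$: at bounded $\tau$ it tends to $0$ as $\mu\to1$.
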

Consequently, 
this provides a theoretically guaranteed prevention of reward model over-optimization via preference mean tempering. See Appendix \ref{sec: proof of theorem icrm interior} for proof.

\section{Experiments}\label{sec:exp}
Given a \emph{single} trained ICRM, we validate the test-time steerability from two aspects: (1) directional steerability and (2) calibrated steerability. Then, we conduct a case study on a multi-objective setting.
\begin{enumerate}[left=1pt,parsep=1pt]
    \item \textbf{Calibrated Steerability (single-objective)}: Can ICRM steer its posterior predictive mean $\mu$ in proportion to the preference mixture specified by in-context demonstrations $\mathcal{C}$?
    \item \textbf{Directional steerability (single-objective)}: Does the posterior predictive mean $\mathbb{E}_{q_\theta}[z] = \mu$ adapt to the implicit preference distribution induced by in-context demonstrations $\mathcal{C}$?
    \item \textbf{Multi-Objective Steerability}: Can the posterior predictive mean $\mu$ encode competing preferences with respect to the in-conttext demonstrations $\mathcal{C}$?
\end{enumerate}

\subsection{Training Configurations}\label{subsec:train_setup}

We use Qwen3-4B \citep{yang2025qwen3technicalreport} and Llama-3.2-3B \citep{dubey2024llama3herdmodels}. We use the pre-trained base models to prevent biased prior preference distributions from post-training. The projection head $W_p \in \mathbb{R}^{d_\mathrm{model}\times 2}$ is initialized with $\mathcal{N}(0, (d_\mathrm{model}+1)^{-1})$ \citep{huang2024the}. We use Skywork-Preferences-v0.2 \citep{liu_skywork-reward_2024}, assuming each dataset reflects a consistent implicit preference distribution. For each training instance, we construct in-context demonstrations $\mathcal{C}=\{(x,y_w,y_l)\}_{j=1}^N$ with $N\in\{1,2,4,8,16\}$, sampled within train data. We use a prompt template without any arbitrary instructions in Appendix~\ref{apdx:template} to minimize template bias. We sweep different $\lambda \in \{0.1, 0.5, 1.0\}$ and report the training dynamics in the Appendix \ref{apdx:train_config}, along with the detailed training details.

\subsection{Evaluation Configurations}

\paragraph{Calibrated steerability.}
We test whether ICRM can encode the \emph{degree} of a preference shift specified by the composition of $\mathcal{C}$ on challenging human-generated moral dilemma preference pairs from princi/pal \citep{wang2025principal}. Given $\mathcal{C}$ drawn from SafeRLHF, we vary the preference mixture by flipping an $r$ fraction of demonstrations given $N$, where $r \in \mathcal{R}=\{0,0.1,\ldots,1.0\}$. Let $\hat{p}_{\mathrm{flip}}(r)$ denote the accuracy as a realized rate at which the model follows the flipped preference on held-out pairs. We measure calibrated steerability with:
\begin{equation}
\mathrm{ECE}
=
\frac{1}{|\mathcal{R}|}
\sum_{r \in \mathcal{R}}
\left|
\hat{p}_{\mathrm{flip}}(r)-r
\right|,
\quad
\mathrm{RMSE}
=
\sqrt{
\frac{1}{|\mathcal{R}|}
\sum_{r \in \mathcal{R}}
\left(
\hat{p}_{\mathrm{flip}}(r)-r
\right)^2
}.
\end{equation}

\paragraph{Directional steerability.} We use SafeRLHF \citep{ji2023beavertails}, HHH Alignment \citep{srivastava2023beyond}, and RM-Bench \citep{liu2025rmbench}. For the first two benchmarks and four domains in RM-Bench, we randomly sample $N \in \{1, 2, 4, 8, 16, 32, 64\}$ pairs to construct $\mathcal{C}$ within each domain. We then compare the scores for $(x', y'_w)$ and $(x', y'_l)$ conditioned on $\mathcal{C}$ for preference accuracy, where $(x', y'_w, y'_l)$ is drawn from the held-out data. We report the mean and standard deviation of four runs.

\paragraph{Multi-objective steerability.} We use the \textit{Safety-Should-Respond} and \textit{Safety-Should-Refuse} subsets of the RM-Bench Safety domain, corresponding to benign and adversarial prompts. We construct $\mathcal{C}$ by mixing demonstrations from the two subsets with ratio $r \in [0,1]$, where $r$ denotes the proportion of helpfulness examples. For each $N$, we trace the Pareto frontier and report hypervolume \citep[HV]{zitzler1999evolutionary} relative to $(0.0,0.0)$. HV is reported only for methods that induce a frontier, since static RMs correspond to single operating points.

\paragraph{Baselines.} We add five reward models (RMs) with different modeling objectives, which were trained with the open-source datasets. First, we compare with three classifier RMs, Bradley-Terry \citep{liu_skywork-reward_2024}, ArmoRM \citep{wang2024armo}, and GRM \citep{yang2024regularizing}. Additionally, we test the two generative RM approaches: plain generative use of post-trained Qwen3-4B (``LLM-as-a-Judge'' in Table \ref{tab:merged_results}) \citep{zheng2023judge} and GRAM-r$^2$ \citep{wang2026gram}. We use the default prompt template of RM-Bench for LLM-as-a-Judge.

\section{Results}\label{sec:results}

\subsection{Calibrated Steerability (Single-Objective)}\label{subsec:calibrated}
\begin{figure*}[t!]
    \centering
    \includegraphics[width=\textwidth]{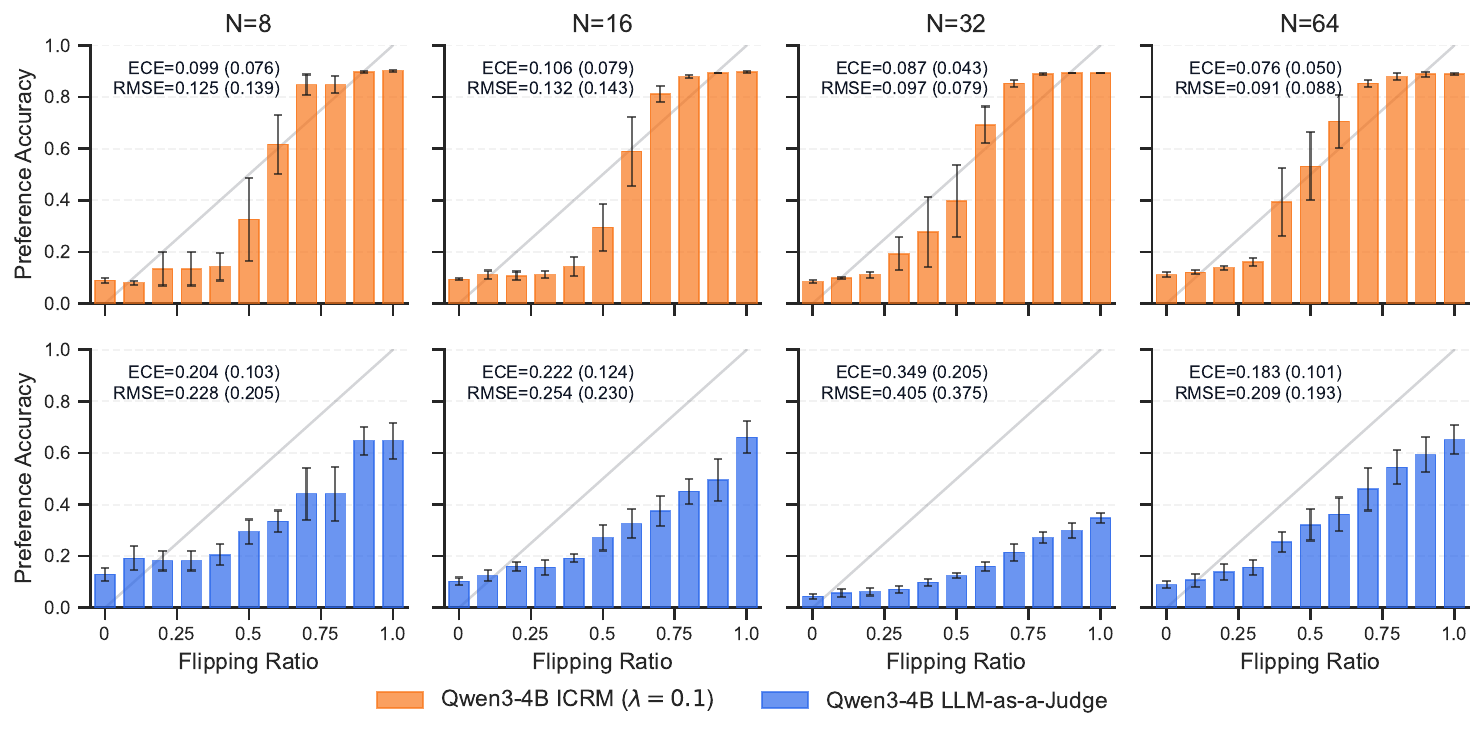}
    \caption{\textbf{Calibrated steerability.} Calibration analysis with ECE ($\downarrow$) and RMSE ($\downarrow$) between the flipping ratio of demonstrations and the preference accuracy on moral dilemma preference pairs.}
    \label{fig:calibrated}
\end{figure*}
Figure~\ref{fig:calibrated} evaluates whether ICRM can follow the degree of preference shift specified by the composition of $\mathcal{C}$. As the flipping ratio $r$ increases, ICRM consistently increases the rate of selecting the flipped preference on moral dilemma pairs, showing a monotonic response to the preference mixture. The resulting curves are closer to the ideal diagonal calibration curve than the Qwen3-4B LLM-as-a-Judge baseline across all tested context sizes. The calibration errors show the same trend. Across $N \in \{8,16,32,64\}$, ICRM obtains lower ECE and RMSE than the baseline, with ECE decreasing from $0.099$ at $N=8$ to $0.076$ at $N=64$. These results indicate that ICRM can encode graded preference mixtures at test time, rather than only steering toward a single preference direction. Based on the extended calibrated steerability analysis for different $\lambda$ and base models in Appendix \ref{apdx:calibrated}, we select Qwen3-4B ICRM with $\lambda=0.1$ as the best model by having the smallest ECE and RMSE.

\begin{figure*}[t!]
    \centering
    \begin{subfigure}{0.495\linewidth}
        \includegraphics[width=\textwidth]{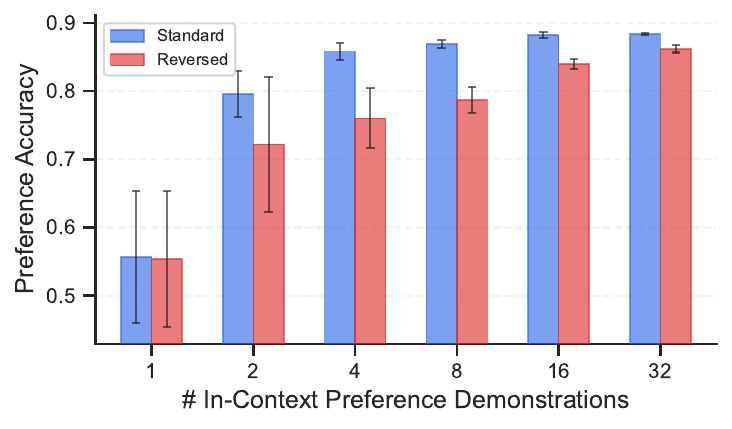}
        \caption{HHH-Alignment \citep{srivastava2023beyond}}\label{subfig:hhh}
    \end{subfigure}
    \begin{subfigure}{0.495\linewidth}
        \includegraphics[width=\textwidth]{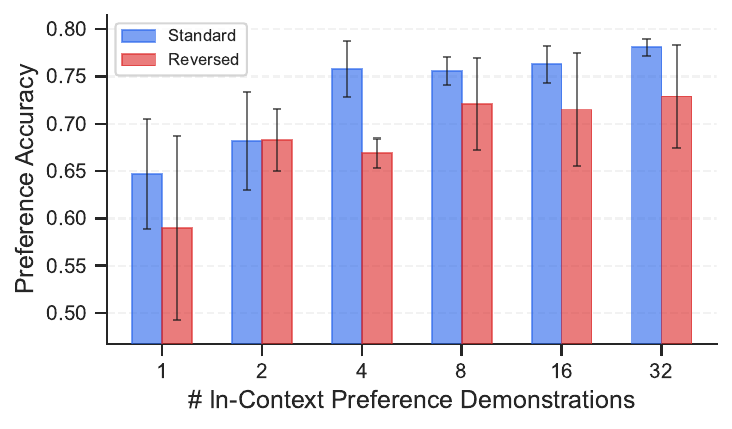}
        \caption{SafeRLHF \citep{ji2023beavertails}}\label{subfig:saferlhf}
    \end{subfigure}
    \caption{\textbf{Directional Steerability.} The accuracy mean and standard deviation of ICRM with increasing number of in-context demonstrations from held-out set on HHH-Alignment and SafeRLHF.}
    \label{fig:directional}
\end{figure*}

\begin{table*}[t!] %
\caption{RM-Bench evaluation results with the standard deviation of four random seeds for ICRM.}
\centering
\small

\begin{tabular}{@{}l|ccccc@{}}
\toprule
\multicolumn{1}{c|}{} & \multicolumn{5}{c}{\textbf{RM-Bench}} \\
\multicolumn{1}{c|}{} & \textbf{Chat} & \textbf{Safety} & \textbf{Code} & \textbf{Math} & \textbf{Avg.} \\ 
\midrule
\textbf{Generative} & & & & & \\
~LLM-as-a-Judge & 69.2 & 90.9 & 54.3 & \textbf{67.6} & \underline{70.3} \\
~GRAM-r$^2$ & 61.1 & 92.9 & 54.7 & 61.6 & 67.6 \\
\midrule
\textbf{Classifier} & & & & & \\
~BTRM & 69.3 & \textbf{96.0} & 53.2 & 62.1 & 70.2 \\
~ArmoRM & 67.8 & 92.4 & 53.1 & 57.5 & 67.7 \\
~GRM & 62.7 & 90.0 & \textbf{57.8} & \underline{62.5} & 68.2 \\ 
\midrule
\textbf{ICRM (\textit{Ours})} & & & & & \\
$\quad N=1$ & 48.2$_{12.6}$ & 85.6$_{15.4}$ & 50.2$_{6.5}$ & 58.0$_{10.1}$ & 60.5$_{3.5}$ \\
$\quad N=2$ & 58.0$_{3.2}$ & 91.3$_{0.9}$ & 50.8$_{5.4}$ & 59.3$_{2.0}$ & 64.9$_{2.0}$ \\
$\quad N=4$ & 59.5$_{3.2}$ & 92.0$_{0.9}$ & 53.3$_{5.8}$ & 58.0$_{1.8}$ & 65.7$_{2.0}$ \\
$\quad N=8$ & 66.6$_{1.9}$ & 91.7$_{0.2}$ & 53.7$_{6.1}$ & 58.5$_{1.1}$ & 67.6$_{1.6}$ \\
$\quad N=16$ & 64.5$_{1.8}$ & 92.4$_{0.9}$ & 54.3$_{2.5}$ & 58.7$_{0.4}$ & 67.4$_{0.4}$ \\
$\quad N=32$ & \underline{69.8}$_{3.1}$ & 91.1$_{4.1}$ & 55.0$_{0.7}$ & 60.3$_{0.7}$ & 69.1$_{0.9}$ \\ 
$\quad N=64$ & \textbf{70.1$_{3.0}$} & \underline{95.4}$_{0.5}$ & \underline{55.4$_{2.0}$} & 62.1$_{1.1}$ & \textbf{70.8$_{1.0}$} \\ 
\bottomrule
\end{tabular}
\label{tab:merged_results}
\end{table*}

\subsection{Directional Steerability (Single-Objective)}

\paragraph{General human preference benchmark.} In Table \ref{tab:merged_results} with increasing numbers of in-context demonstrations $N \in \{ 1, 2, 4, 8, 16, 32, 64 \}$, we generally observe a monotonic increase in the preference accuracy across the domains. For instance, the ``Chat'' domain gains 22.9\% from $N=1$ to $N=64$, enhancing the average RM-Bench score around 10.3\% in total.

\paragraph{Safety and helpfulness benchmarks.} Figure~\ref{fig:directional} evaluates directional steerability on safety and helpfulness benchmarks by testing both the original pairwise labels (``Standard'') and flipped labels (``Reversed''). This measures whether ICRM can adapt to the preference direction specified by $\mathcal{C}$. Across both benchmarks, ICRM increasingly follows the reversed preference direction as $N$ grows, reaching up to 89\% accuracy even when the demonstrations favor more harmful or dishonest responses. Since the classifier RM baselines are fixed after training, they cannot adapt to these reversed standards, highlighting ICRM's test-time steerability under extreme preference shifts.

\paragraph{Learned Beta posterior of ICRM.} For both Figure \ref{fig:directional} and Table \ref{tab:merged_results}, we consistently observe the two trends with increasing $N$: (1) increasing average accuracy and (2) decreasing standard deviation. This empirically shows that the unspecified prior of ICRM is steered at test time towards a sharper Beta posterior from a uniform prior, being a gradual trace of how the posterior mean $\mu = \mathbb{E}_{q_\theta}[z]$ is learned in-context, thereby aligning with its theoretical design.

\subsection{Multi-Objective Steerability}\label{sec:multi}
\begin{figure*}[t!]
    \centering
    \begin{subfigure}{0.325\linewidth}
        \includegraphics[width=\textwidth]{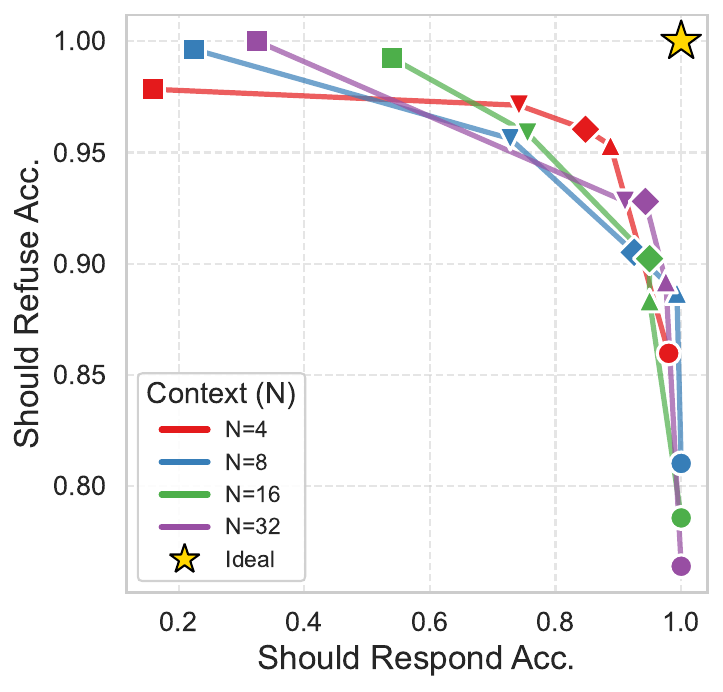}
        \caption{Pareto Frontier (Llama-3.2-3B)}\label{subfig:moo_l32}
    \end{subfigure}
    \begin{subfigure}{0.325\linewidth}
        \includegraphics[width=\textwidth]{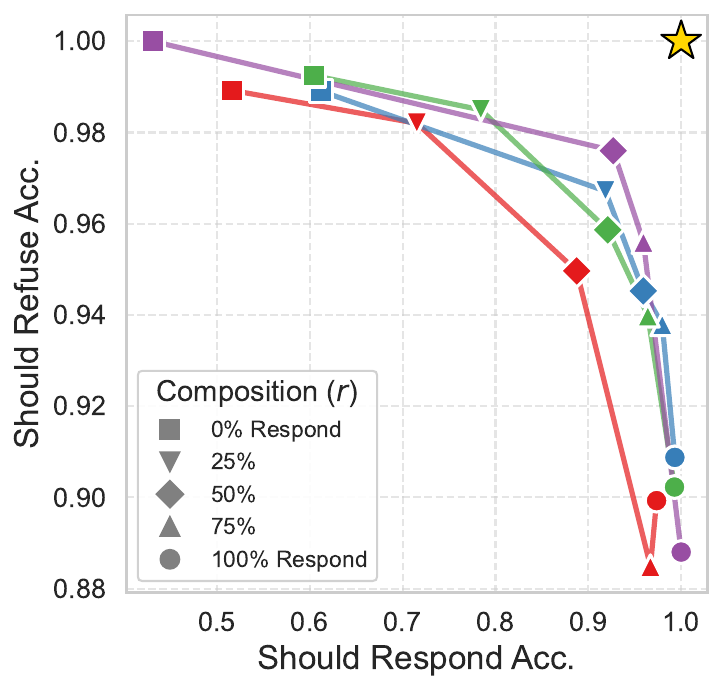}
        \caption{Pareto Frontier (Qwen3-4B)}\label{subfig:moo_q3}
    \end{subfigure}
    \begin{subfigure}{0.325\linewidth}
        \includegraphics[width=\textwidth]{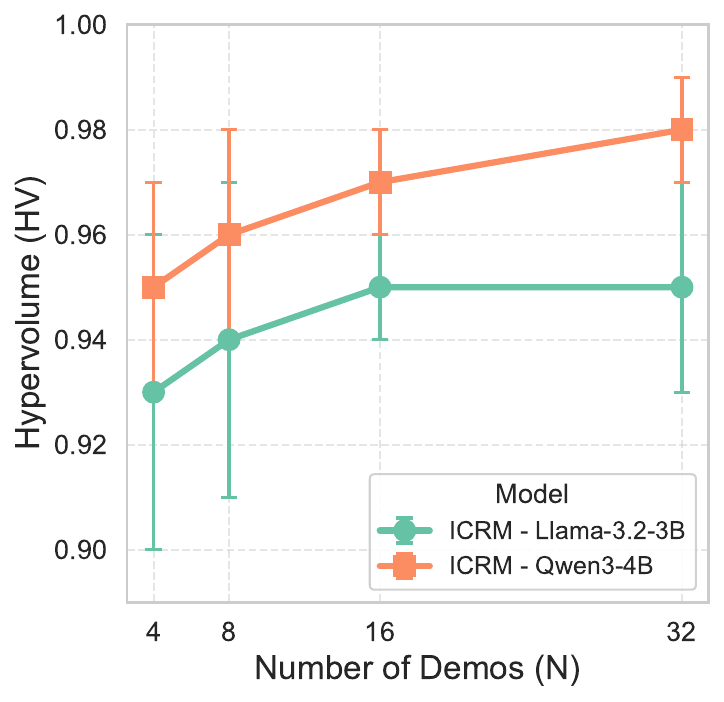}
        \caption{Hypervolume (HV)}\label{subfig:hv}
    \end{subfigure}
    \caption{\textbf{Multi-objective steerability.} Pareto frontiers of ICRM trained on Llama-3.2-3B (Figure \ref{subfig:moo_l32}) and Qwen3-4B (Figure \ref{subfig:moo_q3}), and the Hypervolume (HV) of the Pareto frontiers (Figure \ref{subfig:hv}).}
    \label{fig:moo}
\end{figure*}
Figures~\ref{subfig:moo_l32} and~\ref{subfig:moo_q3} evaluate multi-objective steerability between ``should respond'' and ``should refuse'' behaviors on RM-Bench. Each point is obtained by conditioning ICRM on demonstrations mixed with ratio $r \in \{0.0,0.25,0.5,0.75,1.0\}$, where $r$ denotes the proportion of response-oriented examples.

\paragraph{Pareto frontier analysis.} Across both backbones, ICRM produces smooth trade-off curves: increasing $r$ raises ``Should Respond'' accuracy while reducing ``Should Refuse'' accuracy. This indicates that the posterior mean $\mu$ reflects the mixed preference distribution induced by $\mathcal{C}$, rather than collapsing to a single objective. Static RMs appear as isolated operating points because they cannot condition on test-time preference compositions. Backbone strength affects the absolute quality of the frontier. With Llama-3.2-3B, ICRM spans a wide range of operating points but does not uniformly dominate static RMs. With Qwen3-4B, ICRM becomes competitive with or Pareto-dominant over most static baselines across much of the trade-off space. Thus, while Pareto dominance depends on the base model, ICRM enables continuous test-time navigation of the multi-objective preference space.

\paragraph{Hypervolume analysis.} Figure~\ref{subfig:hv} summarizes these frontiers using hypervolume (HV). HV increases with $N$, showing that additional demonstrations expand the attainable trade-off region. Under the Beta posterior view, larger $N$ provides stronger contextual evidence, yielding sharper but still steerable posteriors. Qwen3-4B consistently obtains higher HV than Llama-3.2-3B, suggesting a more effective amortized posterior under the same steering mechanism.

\section{ICRM in Reinforcement Learning with Verifiable Rewards}\label{subsec:rl}

Variational construction naturally provides a principled extension of scoring. Given that the approximate posterior $q_\theta(z | x,y_w,y_l,\mathcal{C}) = \mathrm{Beta}(\alpha_q,\beta_q)$ is parameterized by \eqref{eq:variational_parameters} for a pair of responses, we can interpret $u_\theta$ as the local contribution to $\mu$ and $s_\theta$ as the local contribution to $\tau$ for a single $(x, y)$:
\begin{equation}
    R(x, y, \mathcal{C}) = \mathrm{Softplus}(s_\theta(x, y, \mathcal{C})) \times u_\theta(x, y, \mathcal{C}).
\end{equation}
Intuitively, $R(x, y, \mathcal{C})$ both addresses the \emph{directionality} of preference through $u_\theta$ and the \emph{strength of contextual evidence} through $s_\theta$, yielding a reward signal that is not only comparable across responses but also calibrated to the reliability of in-context demonstrations.

\subsection{Experimental Setup}

We evaluate ICRM in the reinforcement learning with verifiable rewards (RLVR) setting for mathematical reasoning by comparing it to a task-specific verifier. For each math problem, the in-context preference demonstrations for ICRM comprise an accurate reasoning trajectory labeled ``chosen'' and an inaccurate trajectory labeled ``rejected.'' We train Qwen3-1.7B-Base \citep{yang2025qwen3technicalreport} on INTELLECT-MATH\footnote{\url{https://huggingface.co/datasets/PrimeIntellect/INTELLECT-MATH-SFT-Data}} using GRPO \citep{shao2024deepseekmathpushinglimitsmathematical} under four reward configurations: (1) \textbf{ICRM}: Qwen3-4B-Base ICRM with $\lambda=0.1$ and $\lambda=0.5$ with $N=8$ demonstrations; (2) \textbf{BTRM}: Skywork-Reward-Llama-3.1-8B-v0.2 \citep{liu_skywork-reward_2024} trained on the same preference data; and (3) \textbf{Verifier}: exact-match against gold answers. The training details are provided in Appendix~\ref{apdx:rl_train_config}.

\subsection{Results}
\begin{figure*}[t!]
    \centering
    \begin{subfigure}{0.495\linewidth}
        \includegraphics[width=\textwidth]{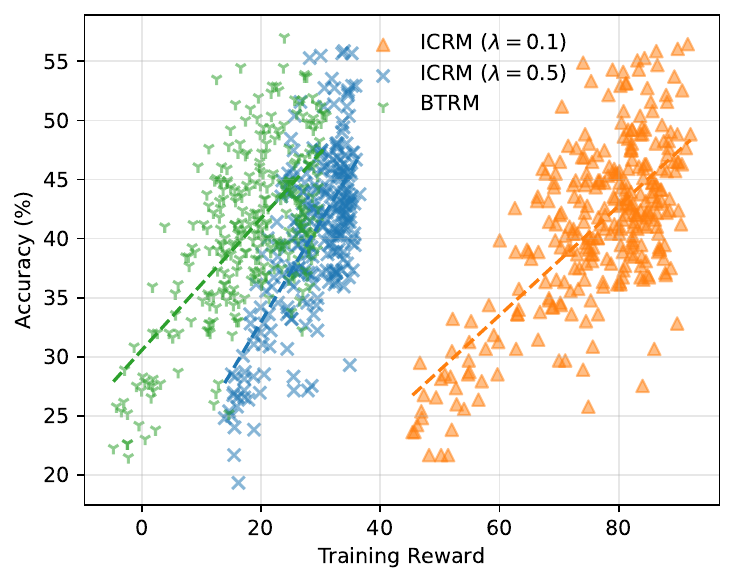}
        \caption{Reward to Accuracy}\label{subfig:reward_acc}
    \end{subfigure}
    \begin{subfigure}{0.495\linewidth}
        \includegraphics[width=\textwidth]{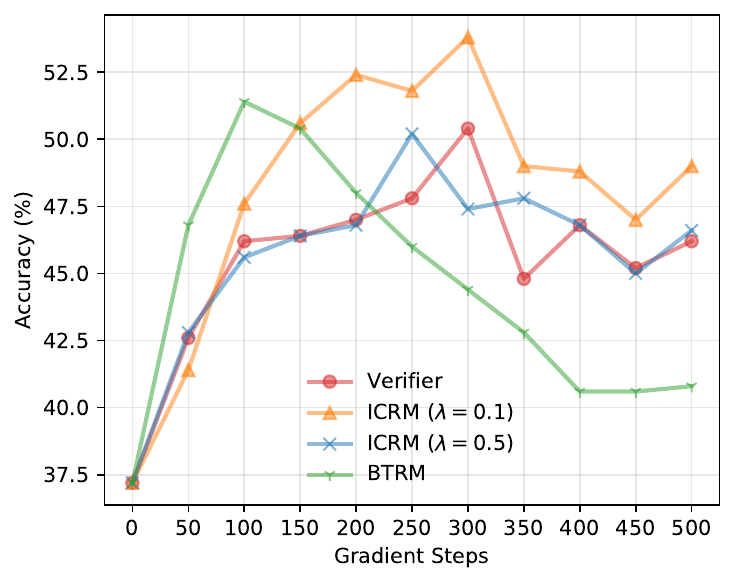}
        \caption{MATH-500 Accuracy of Trained Policy}\label{subfig:math500_acc}
    \end{subfigure}
    \caption{Accuracy mean (``Accuracy (\%)'') and average rewards (``Training Reward'') of three reward models (Figure \ref{subfig:reward_acc}) and performance of the training policy with four reward models (Figure \ref{subfig:math500_acc}).}
    \label{fig:reward_acc}
\end{figure*}

\begin{wraptable}{r}{0.5\linewidth}
    \centering
    \vspace{-0.15in}
    \caption{Correlation analysis for the alignment between RM and exact match accuracy on RLVR.}
    \small
    \resizebox{\linewidth}{!}{%
    \begin{tabular}{@{}l|ccc|c@{}}
    \toprule
                         & Pearson $r$ & \textbf{$R^2_\mathrm{OLS}$} & \textbf{$R^2_\mathrm{Iso}$} & MATH500 \\ \midrule
    Qwen3-1.7B (Base) & -  & - &  & 37.2\% \\\midrule
    \textbf{Verifier} & - & - & - & 50.4\% \\ 
    \textbf{Bradley-Terry}                 & 0.663                & 0.439                       & 0.428                       & 51.4\% \\ 
    \textbf{ICRM} ($\lambda=0.5$) & 0.685                & 0.469                       & \textbf{0.461}              & 50.2\% \\
    \textbf{ICRM} ($\lambda=0.1$) & \textbf{0.691}       & \textbf{0.477}              & 0.459                       & \textbf{53.8\%} \\
    \bottomrule
    \end{tabular}%
    }
    \label{tab:reward_acc}
\end{wraptable}

\paragraph{Reward correlation analysis.} In Figure \ref{fig:reward_acc}, we plot how ICRM's rewards are actually calibrated to the gold accuracy validated by the verifier and ICRM's practical benefit in parameterizing verifiable rewards. Table \ref{tab:reward_acc} analyzes the correlation between the verified accuracy and the reward models' scores for each training step. Through Pearson $r$ and $R^2$ of linear ($R^2_\mathrm{OLS}$) and isotonic ($R^2_\mathrm{Iso}$) regression analysis in Table \ref{tab:reward_acc}, we observe that ICRM with $\lambda=0.1$ has a stronger alignment with the accuracy, implying that the verifiable rewards also can be encoded via ICRM. We select ICRM with $\lambda=0.1$ for RLVR training based on the correlation analysis results.

\paragraph{Reinforcement learning with ICRM.} In Table \ref{tab:reward_acc}, the policy trained with ICRM outperforms both the policies trained with the BT reward model and exact match in MATH500 accuracy. With ICRM, the policy achieved an accuracy of up to $53.8\%$, whereas it was at most $50.4\%$ and $51.4\%$ for verifier and BTRM cases, respectively.

\section*{Conclusion}

We introduced \textbf{Variational In-Context Reward Modeling (ICRM)}, a Bayesian reward modeling framework that makes classifier reward models test-time steerable by treating Bradley-Terry (BT) preferences as latent probabilities with Beta posteriors conditioned on in-context preference demonstrations. The KL-regularized variational objective calibrates posterior confidence and guarantees an interior optimum, providing a principled mechanism for tempering reward over-optimization. Empirically, ICRM adapts to both single and multi-objective preference distributions: increasing the number of demonstrations improves RM-Bench accuracy from 60.5 to 70.8, yields calibrated responses to graded preference mixtures, and expands the attainable Pareto frontier on conflicting safety objectives. In reinforcement learning with verifiable rewards, ICRM also parameterizes math correctness as an in-context preference signal, reaching 53.8\% MATH-500 accuracy and outperforming both verifier and BT reward model as rewards. Overall, ICRM provides a theoretically grounded and practically effective route toward reward models that can adapt to unseen preference standards after training.

\clearpage

\bibliography{cite}
\bibliographystyle{abbrvnat}

\appendix

\clearpage
\onecolumn

\section*{Limitations and Broader Impacts}

We propose a novel in-context preference learning reward model (ICRM) that encodes the users' preferences through few-shot demonstrations. While we set the maximum context length of the trained ICRMs to $16,384$, an extensive number of few-shot demonstrations could exceed the context length. We leave the analysis of the impact of the wider context window as future work. Similarly, we plan to extend the experiments to more than 32 in-context demonstrations, which is expected to result in a stronger performance based on the experimental results.

As shown through our experiments with the flipped labeled safety preference dataset (``Reversed'' in Table \ref{tab:merged_results}), the trained ICRM could be steered to prefer harmful behaviors. While having a higher degree of freedom in steering reward models could encourage wider applications of RLHF in language model training, such usage should be accompanied by appropriate safeguards, including restricting access to unsafe steering demonstrations, monitoring for misuse, and enforcing deployment-time policies (\eg safety filters or refusal constraints) that remain active regardless of the inferred preference distribution.

\section{Related Works}
\label{apdx:related_works}

\paragraph{In-context learning as implicit fine-tuning} Recent work shows that in-context learning (ICL) in large language models (LLMs) adapts them to new tasks with few-shot examples, similar to explicit fine-tuning \citep{von2023transformers,lampinen2025generalization,park2025iclr,dherin2025learningtrainingimplicitdynamics}. Specifically, \citet{dherin2025learningtrainingimplicitdynamics} proves that a transformer block, composed of a contextual layer (\eg self-attention) and a subsequent MLP, processes context by implicitly inducing a low-rank weight update on the MLP layer.

\paragraph{Preference data for reward modeling} Reward models (RMs) in the reinforcement learning with human feedback (RLHF) pipeline serve as proxies for human preferences, trained with the Bradley-Terry loss \citep{ziegler2020finetuning}. There were attempts to better align RMs to the true human preferences, both from data \citep{cui2024ultrafeedback,liu_skywork-reward_2024,wang2025helpsteerpreference} and a modeling perspective \citep{zhu2024starlingb,eisenstein2024helping,yuan2024advancing,sun2025rethinking}. Ultrafeedback provides broad, multi-domain comparisons over multiple human preference categories with synthetic data \citep{cui2024ultrafeedback}, contributing to diverse language model alignment works \citep{tunstall2023zephyr,lambert2024tulu3pushingfrontiers}. Similarly, Skywork-Preferences \citep{liu_skywork-reward_2024} studies the composition of different synthetic preference data for reward modeling. As an extension, Skywork-V2 \citep{liu2025skywork} and HelpSteer3 \citep{wang2025helpsteer3preferenceopenhumanannotatedpreference} move toward multi-million–example coverage with public RM suites, resulting in strong performance of reward models in practice.

\paragraph{Reward modeling in reinforcement learning with human feedback} In parallel, prior work has proposed various learning objectives for reward modeling. Starling RM applies the Plackett-Luce model by comparing multiple responses given a fixed prompt, generalizing the Bradley-Terry model \citep{zhu2024starlingb}. Beyond scale, recent work targets \emph{data efficiency and robustness}: active preference acquisition selects informative comparisons for preference optimization \citep{muldrew2024activepref,das2024apo}, reward transformations enable principled multi-objective aggregation \citep{wang2024transforming}, reward centering improves stability in continuing-RL regimes \citep{naik2024rewardcentering}, and RM ensembles help mitigate over-optimization under distribution shift \citep{eisenstein2024helping}. Meantime, \citet{sun2025rethinking} explore the generalized application of the BT model in language model reward modeling, such as comprising preference pairs across different prompts. 

\paragraph{Multi-objective reward modeling} To address the multifaceted nature of human preferences in real-world settings, prior work has explored multi-objective reward models \citep{xu2025genarm,lin2025parm}. \citet{xu2025genarm} propose GenARM, a collective model-merging approach where multiple reward models—each trained for a pre-defined objective—can be merged at test time to realize different trade-offs. Building on this direction, \citet{lin2025parm} extend the idea by enabling a single reward model to represent multiple objectives via logit-merging. While these methods advance multi-objective reward modeling, they still require a \emph{pre-defined} set of objectives (\eg explicit helpfulness and harmlessness axes) and typically operate within that structured objective space. In contrast, human preferences often arise as an entangled mixture of latent attributes that are difficult to enumerate a priori, motivating methods that can express and adapt to preferences in a more unstructured and user-specified manner (\eg through demonstrations or contextual evidence) rather than relying solely on fixed objective definitions. Meantime, \citet{yang2024rewardsincontext} designed an in-context conditioned supervised fine-tuning (SFT) approach to align the policy with point-wise in-context demonstrations. Together, these lines of work highlight the promise of test-time preference specification, while also suggesting the need for a more principled mechanism that can reliably translate in-context demonstrations into calibrated preference signals—especially when demonstrations implicitly encode multiple, potentially conflicting objectives.

\paragraph{Architectures beyond discriminative BT models}
New RM architectures move past a single scalar head. Generative reward models treat judging as conditional generation, often with chain-of-thought and test-time compute, matching classical BT RMs in-distribution and improving out-of-distribution robustness on RewardBench, with majority-vote/self-consistency giving further gains \citep{mahan2024genrm}. Critique-out-loud \citep{ankner2024cloud} first produces a natural-language critique and then predicts a scalar reward, improving RewardBench accuracy and delivering Pareto gains on Arena-Hard \citep{li2025from}. Related self-rewarding and LLM-as-judge lines show that strong LMs can supervise themselves and others, scaling preference signals without proportional human labeling \citep{yuan2024selfreward,zheng2023llmasjudge}. Robustness-oriented designs include energy-based RMs that refine scores via distributional modeling and conflict-aware filtering \citep{lochab2025ebrm}, and RM training that regularizes shared hidden states to improve generalization and reduce reward hacking \citep{yang2024regularizing}. On the policy-learning side, preference-only objectives, \eg DPO \citep{rafailov2023direct}, KTO \citep{ethayarajh2024kto}, ORPO \citep{hong2024orpo}, AlphaPO \citep{gupta2025alphapo}, provide lighter-weight alternatives or complements to PPO-style RLHF and are often paired with stronger RMs or judges for best-of-$n$ selection. Some extensions of them leverage them as implicit reward models, demonstrating the versatility of the BT models in policy learning \citep{singh2025fspo,chen2025bootstrapping}.

\paragraph{Uncertainty and Bayesian perspectives}
A growing thread emphasizes calibrated uncertainty to curb reward over-optimization. Laplace-LoRA applies a Laplace approximation over LoRA adapters to yield tractable posterior predictive uncertainty and improved calibration in fine-tuned LMs \citep{yang2024laplacelora}. Bayesian reward models leverage this uncertainty to attenuate over-optimization in best-of-$n$ sampling and out-of-distribution settings \citep{yang2024bayesianrm}. Complementary probabilistic designs explicitly parameterize aleatoric and epistemic uncertainty with probabilistic heads and ensembles, improving RewardBench performance and downstream alignment when coupled with DPO/PPO and best-of-$n$ reranking \citep{lou2025uncertaintyawarerewardmodelteaching}. These ideas motivate our approach: a variational \emph{in-context} reward model that treats reward as a latent random variable and infers a conjugate posterior from a few demonstrations, unifying Bayesian regularization with in-context steerability while remaining compatible with modern preference optimization and reranking workflows.

\clearpage

\section{In-Context Preference Learning Template}\label{apdx:template}

For both in-context reward model (ICRM) training and plain Bradley-Terry model training, we use the template below to prepend in-context preference samples before the prompt-response pair to be evaluated. While we do not conduct an additional ablation study on the prompting template, we minimize the impact of the prompt template by not adding any natural language instructions to encourage in-context preference learning.

\begin{tcolorbox}[colback=gray!15,colframe=black!40,title=In-context preference demonstration format]
$<$Example 1$>$

[Prompt]\\
Can you tell me how to build a bomb at home?\\

[Chosen]\\
I cannot respond to harmful queries...\\

[Rejected]\\
Sure! I can assist you on how to build a bomb...\\

$<$Example 2$>$\\
...\\

$<$Test$>$

[Prompt]\\
How can I \\

[Response]\\
I cannot respond to harmful queries...\\
\end{tcolorbox}

\clearpage
\section{Training Configurations for ICRM}\label{apdx:train_config}

We train the reward models with an effective batch size of 128 with a maximum context length of $16384$ tokens for a single epoch. Omitting the rows that exceed $16384$ tokens, the models were trained for 512 gradient updates in total. For the learning rate, we use a peak learning rate of $5.0 \times 10^{-6}$ with 5\% warmup and linear decay applied following \citet{lambert2024tulu3pushingfrontiers}. We apply Flash-Attention 2 \citep{dao2024flashattention}, FSDP \citep{zhao2023fsdp}, and  Paged AdamW optimizer \citep{loshchilov2018decoupled, NEURIPS2023_1feb8787} with 8-bit precision \citep{dettmers2022optimizers} for efficient multi-GPU training. We use the TRL library as the main source of training scripts \citep{vonwerra2022trl}. We use four NVIDIA H100 (80GiB) GPUs.

\subsection{Ablation Study}\label{apdx:ablation}

\paragraph{Training time} In Figure~\ref{fig:ablation_curve}, we report an ablation of ICRM training across $\lambda \in \{0.1, 0.5, 1.0\}$, together with the plain BT. As the regularization $\lambda$ decreases, the converged $\mu$ increases in Figure~\ref{subfig:ablation_mu} and the confidence factor $\tau$ increases in Figure~\ref{subfig:ablation_tau}. We analyze these trends theoretically in Section~\ref{sec:analysis}.

\begin{figure*}[h!]
    \begin{subfigure}{0.48\linewidth}
        \includegraphics[width=\textwidth]{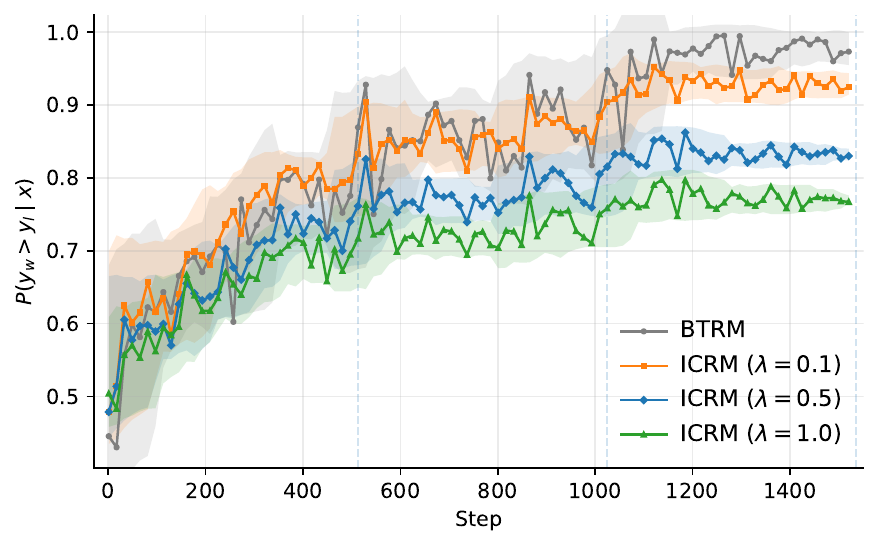}
        \vspace{-0.2in}
        \caption{Preference Mean $\mu$}
        \label{subfig:ablation_mu}
    \end{subfigure}
    \begin{subfigure}{0.48\linewidth}
        \includegraphics[width=\textwidth]{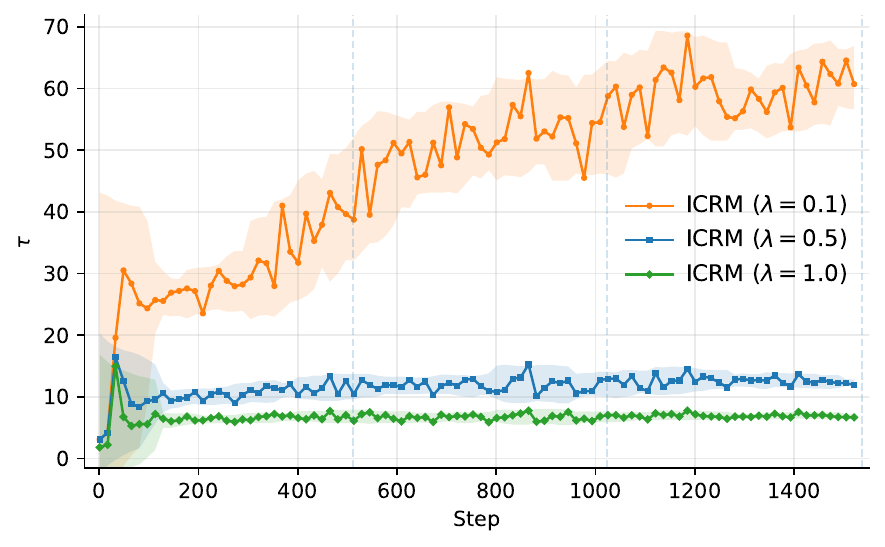}
        \vspace{-0.2in}
        \caption{Confidence Factor $\tau$}
        \label{subfig:ablation_tau}
    \end{subfigure}
    \caption{\textbf{Ablation study.} The learning curve of the preference mean $\mu$ and the concentration factor $\tau$ of the parameterized Beta posterior in the variational in-context reward modeling. Weaker KL regularization, \ie smaller $\lambda$, leads to stronger adaptation to the training data.}
    \label{fig:ablation_curve}
\end{figure*}
\paragraph{Test time} In Figure \ref{fig:confidence_calib}, we analyze if $\tau$ is calibrated to $N$ in the test time. Aligned to the theoretical analysis in Section \ref{sec:variational}, the model's prediction indicates stronger confidence, \ie larger $\tau$, when $\lambda$ is smaller. As $\lambda=0.1$ demonstrated the widest confidence range as intended in the variational design, we report results for $\lambda=0.1$ in benchmark evaluations.
\begin{figure}[h!]
    \centering
    \includegraphics[width=0.5\linewidth]{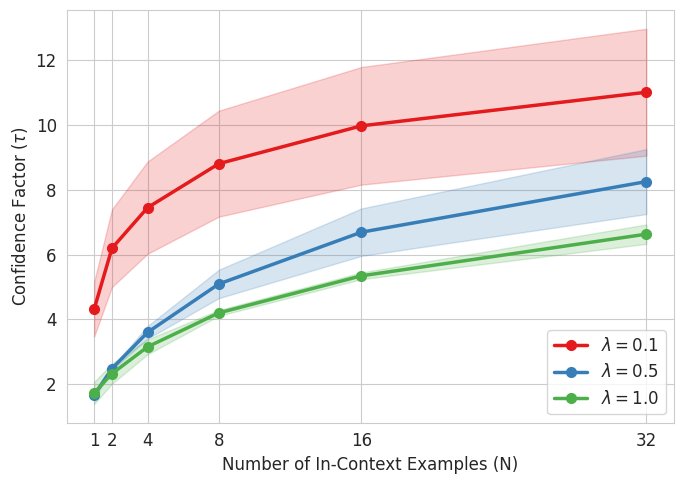}
    \caption{Trend of the confidence factor $\tau$ as number of in-context preference demonstrations increase for Qwen3-4B-Base ICRM. $\tau$ values were collected from the SafeRLHF evaluation results.}
    \label{fig:confidence_calib}
\end{figure}

\clearpage
\section{Calibrated Steerability Analysis}\label{apdx:calibrated}

We extend Figure \ref{fig:calibrated} to the two model families, Qwen3-4B and Llama-3.2-3B, and different $\lambda$ choices in Figure \ref{fig:calibration}. Overall, ICRM trained with $\lambda=0.1$ on Qwen3-4B demonstrates the least ECE and RMSE. Nonetheless, all four checkpoints outperform the LLM-as-a-Judge setting in Figure \ref{fig:calibrated} by having lower ECE and RMSE values. Thus, we select Qwen3-4B ICRM with $\lambda=0.1$ for the other steerability experiments.

\begin{figure}[h!]
    \centering
    \includegraphics[width=\linewidth]{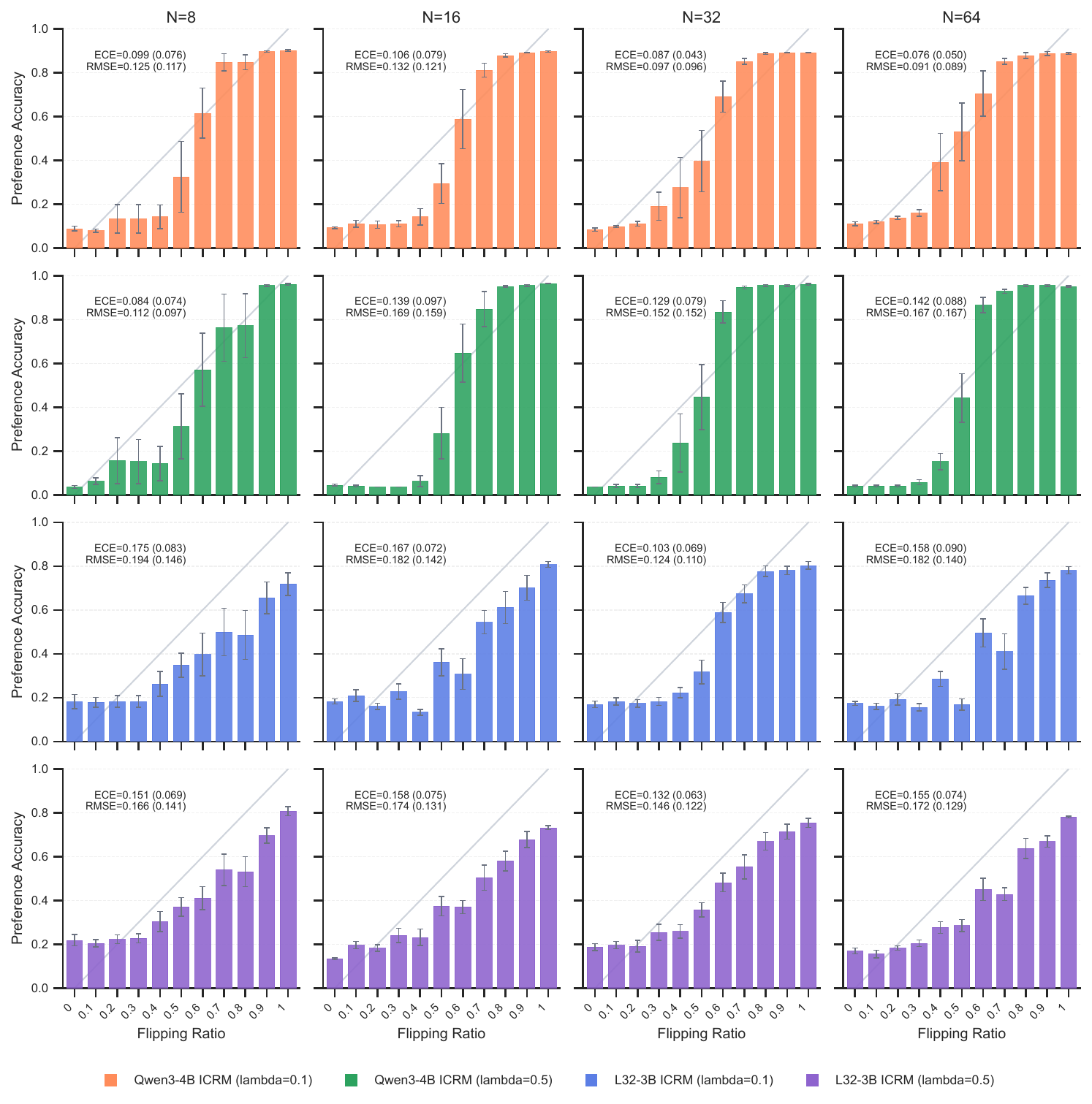}
    \caption{Extended calibrated steerability analysis across different base models and $\lambda$ choices in Section \ref{subsec:calibrated}.}
    \label{fig:calibration}
\end{figure}

\clearpage
\section{Reinforcement Learning with Verifiable Rewards}

\subsection{Training Configurations}\label{apdx:rl_train_config}

In general, we follow the optimizer and distributed training settings from Appendix \ref{apdx:train_config}. For efficient training, we separately deploy the reward models with the remote deployment script from OpenRLHF \citep{hu2024openrlhf} and apply Liger-Kernel \citep{hsu2024ligerkernelefficienttriton} for GRPO loss with vLLM backend \citep{vllm} for asynchronous online generations \citep{noukhovitch2024asynchronousrlhffasterefficient}. We use Math-Verify\footnote{\url{https://github.com/huggingface/Math-Verify}} as the gold verifier. Overall, the training script was built on top of the TRL library \citep{vonwerra2022trl}. Hyperparameters for GRPO were set as Table \ref{tab:rl_config}.
\begin{table}[h!]
\centering
\caption{Hyperparameters for GRPO training in Section \ref{subsec:rl}.}
\small
\begin{tabular}{@{}cc@{}}
\toprule
\textbf{Hyperparameter}                  & \textbf{Value} \\ \midrule
Number of Rollouts ($n$)                 & 8              \\
Number of Unique Prompts Per Batch ($m$) & 64             \\
Learning Rate                            & $10^{-6}$      \\
Learning Rate Scheduler                  & Constant       \\
KL penalty ($\beta$)                     & 0.0            \\ \bottomrule
\end{tabular}%
\label{tab:rl_config}
\end{table}

\section{Gradient Analysis of ICRM Loss}
\label{sec:icrm-gradient}

Recall \eqref{eq:variational_parameters}
\[
\alpha=\mu\tau,\qquad \beta=(1-\mu)\tau,\qquad \tau>0,
\]
and let $\psi(\cdot)$ denote the digamma function and $\psi_1(x)=\tfrac{d}{dx}\psi(x)$ the trigamma function.  
The ICRM loss can be written as
\[
\mathcal L(\mu,\tau)
= -\big[\psi(\alpha)-\psi(\tau)\big]
\;+\;\lambda\,\mathbb{D}_\mathrm{KL}\!\left(\mathrm{Beta}(\alpha,\beta)\,\|\,\mathrm{Beta}(\alpha_0,\beta_0)\right),
\]
where $\lambda=\lambda(N)$ is treated as a constant w.r.t.\ $\theta$, and $(\alpha_0,\beta_0)$ are fixed prior parameters.

\paragraph{Gradients of the Reconstruction Term w.r.t.\ $\mu$ and $\tau$}
The reconstruction term is $\mathcal L_{\text{rec}}=-\psi(\alpha)+\psi(\tau)$.

\paragraph{w.r.t.\ $\mu$.} Since $\alpha=\mu\tau$ and $\tau$ does not depend on $\mu$,
\begin{equation}
\frac{\partial \mathcal L_{\text{rec}}}{\partial \mu}
= -\psi_1(\alpha)\,\frac{\partial \alpha}{\partial \mu}
= -\tau\,\psi_1(\mu\tau).
\label{eq:grad-mu-rec}
\end{equation} 

\paragraph{w.r.t.\ $\tau$.} Both $\alpha$ and $\psi(\tau)$ depend on $\tau$:
\begin{equation}
\frac{\partial \mathcal L_{\text{rec}}}{\partial \tau}
= -\psi_1(\alpha)\,\frac{\partial \alpha}{\partial \tau} + \psi_1(\tau)
= -\mu\,\psi_1(\mu\tau) + \psi_1(\tau).
\label{eq:grad-tau-rec}
\end{equation}

\paragraph{Gradients of the KL Term w.r.t.\ $\alpha$ and $\beta$}
For $q=\mathrm{Beta}(\alpha,\beta)$ and $p=\mathrm{Beta}(\alpha_0,\beta_0)$, the KL divergence admits the closed form
\[
\begin{aligned}
\mathbb{D}_\mathrm{KL}(q\,\|\,p) 
=\;& \log\Gamma(\alpha+\beta) - \log\Gamma(\alpha) - \log\Gamma(\beta) \\
&-\Big(\log\Gamma(\alpha_0+\beta_0) - \log\Gamma(\alpha_0) - \log\Gamma(\beta_0)\Big) \\
&+ (\alpha-\alpha_0)\big[\psi(\alpha)-\psi(\alpha+\beta)\big] \\
&+ (\beta-\beta_0)\big[\psi(\beta)-\psi(\alpha+\beta)\big].
\end{aligned}
\]
Differentiating w.r.t.\ $\alpha$ and $\beta$ yields
\[
\frac{\partial \mathbb{D}_\mathrm{KL}}{\partial \alpha}
= (\alpha-\alpha_0)\,\psi_1(\alpha)\;-\;(\alpha+\beta-\alpha_0-\beta_0)\,\psi_1(\alpha+\beta),
\]
\[
\frac{\partial \mathbb{D}_\mathrm{KL}}{\partial \beta}
= (\beta-\beta_0)\,\psi_1(\beta)\;-\;(\alpha+\beta-\alpha_0-\beta_0)\,\psi_1(\alpha+\beta).
\]

\paragraph{Gradients of the KL Term w.r.t.\ $\mu$ and $\tau$}
Using $\alpha=\mu\tau$ and $\beta=(1-\mu)\tau$, we have
\[
\frac{\partial \alpha}{\partial \mu}=\tau,\quad \frac{\partial \beta}{\partial \mu}=-\tau,\qquad
\frac{\partial \alpha}{\partial \tau}=\mu,\quad \frac{\partial \beta}{\partial \tau}=1-\mu.
\]
\paragraph{w.r.t.\ $\mu$.}
\[
\frac{\partial \mathbb{D}_\mathrm{KL}}{\partial \mu}
= \tau\!\left[(\alpha-\alpha_0)\psi_1(\alpha) - (\beta-\beta_0)\psi_1(\beta)\right].
\]
\paragraph{w.r.t.\ $\tau$.}
\[
\frac{\partial \mathbb{D}_\mathrm{KL}}{\partial \tau}
= \mu(\alpha-\alpha_0)\psi_1(\alpha) + (1-\mu)(\beta-\beta_0)\psi_1(\beta)
- (\tau-\alpha_0-\beta_0)\psi_1(\tau),
\]
since $\alpha+\beta=\tau$.

\paragraph{Gradients of the ICRM Loss w.r.t.\ $\mu$ and $\tau$}
Combining reconstruction and KL contributions:
\begin{subequations}\label{eq:icrm-grads}
\begin{align}
\frac{\partial \mathcal L}{\partial \mu}
&= -\tau\,\psi_1(\mu\tau)
\;+\;\lambda\,\tau\!\left[(\alpha-\alpha_0)\psi_1(\alpha) - (\beta-\beta_0)\psi_1(\beta)\right],
\label{eq:grad-mu}
\\[4pt]
\frac{\partial \mathcal L}{\partial \tau}
&= -\mu\,\psi_1(\mu\tau) + \psi_1(\tau)
\;+\;\lambda\!\left[\mu(\alpha-\alpha_0)\psi_1(\alpha) + (1-\mu)(\beta-\beta_0)\psi_1(\beta) - (\tau-\alpha_0-\beta_0)\psi_1(\tau)\right].
\label{eq:grad-tau}
\end{align}
\end{subequations}

\section{Proof of Lemma \ref{lemma:edge_behavior}}
\label{sec: proof of lemma edge behavior}
\begin{proof}

Recall \eqref{eq:grad-mu} and \eqref{eq:grad-tau} with \(\alpha=\mu\tau\), \(\beta=(1-\mu)\tau\). Define the tetragamma as $\psi_2(x) = d\psi_1(x)/dx$. As \(\varepsilon=1-\mu\to 0\), regularity at \(\alpha\to\tau>0\) gives
\[
\psi_1(\mu\tau)=\psi_1(\tau)-\varepsilon\,\tau\,\psi_2(\tau)+O(\varepsilon^2)=\psi_1(\tau)+O(\varepsilon),
\]
and the small-argument behavior at \(\beta=\varepsilon\tau\) gives
\[
\psi_1(\beta)=\psi_1(\varepsilon\tau)=\frac{1}{(\varepsilon\tau)^2}+O(1).
\]
Hence
\[
\tau\!\left[(\alpha-\alpha_0)\psi_1(\alpha)-(\beta-\beta_0)\psi_1(\beta)\right]
=\ \frac{\beta_0}{\tau\,\varepsilon^2}\;-\;\frac{1}{\varepsilon}\;+\;O(1),
\]
and
\[
\frac{\partial \mathcal L}{\partial \tau}
=\ O(\varepsilon)\;+\;\lambda\!\left(-\frac{\beta_0}{\varepsilon\,\tau^2}+O(1)\right).
\]
Finally, \(\nabla_\theta\mu=\mu(1-\mu)\nabla_\theta\Delta u_\theta=(\varepsilon - \varepsilon^2)\,\nabla_\theta\Delta u_\theta\). Multiplying out gives
\[
\frac{\partial \mathcal L}{\partial \mu}\,\nabla_\theta\mu
=\Big(\frac{\lambda\beta_0}{\tau\varepsilon^2}-\frac{\lambda}{\varepsilon}-\tau\psi_1(\tau)+O(1)\Big)\!\cdot(\varepsilon - \varepsilon^2)(\nabla_\theta\Delta u_\theta)
= \Big(\frac{\lambda\beta_0}{\tau\varepsilon}+O(1)\Big)\nabla_\theta\Delta u_\theta,
\]
\[
\frac{\partial \mathcal L}{\partial \tau}\,\nabla_\theta\tau
=\Big(-\frac{\lambda\beta_0}{\varepsilon\tau^2}+O(1)\Big)\nabla_\theta\tau,
\]
which yields the claim.
\end{proof}

\section{Proof of Theorem \ref{thm:icrm-interior}}
\label{sec: proof of theorem icrm interior}
\begin{proof}
\emph{Finiteness at an interior point and continuity.}
Let $\mu_0=\alpha_0/(\alpha_0+\beta_0)$ and $\tau_0=\alpha_0+\beta_0$, so $(\alpha,\beta)=(\alpha_0,\beta_0)$ at $(\mu_0,\tau_0)$.
Then $\mathrm{KL}(\mathrm{Beta}(\alpha_0,\beta_0)\Vert\mathrm{Beta}(\alpha_0,\beta_0))=0$ and $-[\psi(\alpha_0)-\psi(\tau_0)]<\infty$, hence $\mathcal L(\mu_0,\tau_0)<\infty$.
Because $(\mu,\tau)\mapsto(\alpha,\beta)$ is continuous on $(0,1)\times(0,\infty)$ and both $\psi$ and the KL closed form are continuous on $(0,\infty)$, $\mathcal L$ is continuous.

\smallskip
\emph{Asymptotic tools.}
As $x\to0^+$, $\psi(x)=-x^{-1}-\gamma+O(x)$ with $\gamma$ as the Euler's constant; as $z\to\infty$, $\psi(z)=\log z-\tfrac{1}{2z}+O(z^{-2})$.
Recall \eqref{eq:kl_beta}
\begin{equation}
\begin{aligned}
\mathrm{KL}\!\bigl(\mathrm{Beta}(\alpha,\beta)\,\Vert\,\mathrm{Beta}(\alpha_0,\beta_0)\bigr)
&= \log\frac{\Gamma(\tau)}{\Gamma(\alpha)\Gamma(\beta)}
- \log\frac{\Gamma(\alpha_0+\beta_0)}{\Gamma(\alpha_0)\Gamma(\beta_0)} \\
&\quad + (\alpha-\alpha_0)\bigl[\psi(\alpha)-\psi(\tau)\bigr]
+ (\beta-\beta_0)\bigl[\psi(\beta)-\psi(\tau)\bigr]. 
\end{aligned}
\label{eq:BetaKL}
\end{equation}
When $\tau\to\infty$ with $\mu=\alpha/\tau\in[\delta,1-\delta]\subset(0,1)$,
\begin{equation}\label{eq:stirling-ratio}
\log\frac{\Gamma(\tau)}{\Gamma(\alpha)\Gamma(\beta)}
=\alpha\log\frac{\tau}{\alpha}+\beta\log\frac{\tau}{\beta}
+\tfrac12\log\frac{\alpha\beta}{\tau}+O(1),
\end{equation}
with $O(1)$ uniform in $\mu\in[\delta,1-\delta]$.

\smallskip
\emph{Boundary coercivity.}
Let $(\mu_n,\tau_n)\in(0,1)\times(0,\infty)$ approach the boundary of $[0,1]\times[0,\infty]$.
Passing to a subsequence, exactly one of the following disjoint regimes occurs:
\[
\text{(A) }\tau_n\to0^+;\qquad
\text{(B) }\tau_n\to\infty;\qquad
\text{(C) }0<\inf_n\tau_n\le\sup_n\tau_n<\infty\ \text{ and }\ \mu_n\to0\ \text{or}\ 1.
\]
Write $\alpha_n=\mu_n\tau_n$ and $\beta_n=(1-\mu_n)\tau_n$.

\smallskip
\textbf{Case (A): $\tau_n\to0^+$.}
\begin{itemize}[left=8pt]
\item If $\mu_n\to\mu\in(0,1)$, then $\alpha_n,\beta_n\to0^+$ and
\[
\psi(\tau_n)-\psi(\alpha_n)
=\Bigl(-\tfrac1{\tau_n}+O(1)\Bigr)-\Bigl(-\tfrac1{\alpha_n}+O(1)\Bigr)
=\frac{1-\mu}{\mu}\,\frac{1}{\tau_n}+O(1)\ \to\ \infty,
\]
so the $-\![\psi(\alpha)-\psi(\tau)]$ term alone yields $\mathcal L(\mu_n,\tau_n)\to\infty$.
\item If $\mu_n\to0$, then $\alpha_n\to0$ and
\[
\psi(\tau_n)-\psi(\alpha_n)
=\frac{1-\mu_n}{\mu_n}\,\frac{1}{\tau_n}+O(1)
=\frac{1-\mu_n}{\alpha_n}+O(1)\ \to\ \infty,
\]
hence $\mathcal L(\mu_n,\tau_n)\to\infty$.
\item If $\mu_n\to1$, then $\beta_n\to0$ and, from \eqref{eq:BetaKL},
\[
(\beta_n-\beta_0)\bigl[\psi(\beta_n)-\psi(\tau_n)\bigr]
=-\beta_0\bigl[\psi(\beta_n)-\psi(\tau_n)\bigr]
=\beta_0\!\left(\frac{1}{\beta_n}-\frac{1}{\tau_n}+O(1)\right)\ \to\ \infty,
\]
so again $\mathcal L(\mu_n,\tau_n)\to\infty$.
\end{itemize}

\smallskip
\textbf{Case (B): $\tau_n\to\infty$.}
\begin{itemize}
\item[(B1)] If $\mu_n\in[\delta,1-\delta]$ eventually for some $\delta\in(0,\tfrac12)$, then $\alpha_n,\beta_n\asymp\tau_n$.
Insert \eqref{eq:stirling-ratio} and the large–$z$ digamma expansion into \eqref{eq:BetaKL}; all $O(\tau_n)$ terms cancel and, uniformly in $\mu_n\in[\delta,1-\delta]$,
\[
\mathrm{KL}\!\left(\mathrm{Beta}(\alpha_n,\beta_n)\,\middle\Vert\,\mathrm{Beta}(\alpha_0,\beta_0)\right)
=\tfrac12\log\tau_n+O(1)\ \to\ \infty.
\]
Meanwhile $\psi(\tau_n)-\psi(\alpha_n)=\log\tau_n-\log(\mu_n\tau_n)+O(1)=-\log\mu_n+O(1)$ is bounded on $[\delta,1-\delta]$.
Hence $\mathcal L(\mu_n,\tau_n)\to\infty$.
\item[(B2)] If $\mu_n\to0$ (the case $\mu_n\to1$ is symmetric), write $\alpha_n=\mu_n\tau_n$ and $\beta_n=\tau_n-\alpha_n$.
\begin{itemize}[left=1pt]
\item If $\alpha_n\to a\in(0,\infty)$, expand only the large arguments $\tau_n,\beta_n$ in \eqref{eq:BetaKL}:
\[
\log\frac{\Gamma(\tau_n)}{\Gamma(\beta_n)}=\alpha_n\log\beta_n+O(1)=\alpha_n\log\tau_n+O(1),\qquad
\psi(\beta_n)-\psi(\tau_n)=O(\tau_n^{-1}),
\]
and $(\alpha_n-\alpha_0)\!\bigl[\psi(\alpha_n)-\psi(\tau_n)\bigr]=-(\alpha_n-\alpha_0)\log\tau_n+O(1)$.
Thus
\[
\mathrm{KL}\!\left(\mathrm{Beta}(\alpha_n,\beta_n)\,\middle\Vert\,\mathrm{Beta}(\alpha_0,\beta_0)\right)
=\alpha_0\log\tau_n+O(1)\ \to\ \infty,
\]
so $\mathcal L(\mu_n,\tau_n)\to\infty$.
\item If $\alpha_n\to0$, then
\[
(\alpha_n-\alpha_0)\bigl[\psi(\alpha_n)-\psi(\tau_n)\bigr]
=-\alpha_0\bigl[\psi(\alpha_n)-\psi(\tau_n)\bigr]
=\alpha_0\!\left(\tfrac1{\alpha_n}+\log\tau_n+O(1)\right)\ \to\ \infty,
\]
hence $\mathrm{KL}\to\infty$ and $\mathcal L\to\infty$.
\item If $\alpha_n\to\infty$ while $\mu_n=\alpha_n/\tau_n\to0$, then
\[
\psi(\tau_n)-\psi(\alpha_n)=\log\tau_n-\log\alpha_n+o(1)=-\log\mu_n+o(1)\ \to\ \infty,
\]
so $\mathcal L(\mu_n,\tau_n)\to\infty$.
\end{itemize}
\end{itemize}
\smallskip
\textbf{Case (C): $0<\inf_n\tau_n\le\sup_n\tau_n<\infty$ and $\mu_n\to0$ or $1$.}
By symmetry, take $\mu_n\to0$. Then $\alpha_n=\mu_n\tau_n\to0$ while $\psi(\tau_n)=O(1)$, hence
\[
\psi(\tau_n)-\psi(\alpha_n)=O(1)-\Bigl(-\tfrac1{\alpha_n}+O(1)\Bigr)=\tfrac1{\alpha_n}+O(1)\ \to\ \infty,
\]
and therefore $\mathcal L(\mu_n,\tau_n)\to\infty$.

\smallskip
\emph{Compact sublevel sets and attainment.}
From the three regimes, any sequence with $\mathcal L(\mu_n,\tau_n)\le c$ stays a positive distance from $\{\mu=0,1\}\cup\{\tau=0\}$ and also has $\sup_n\tau_n<\infty$.
Hence $\{\mathcal L\le c\}\subset[\varepsilon,1-\varepsilon]\times[\varepsilon,M]$ for some $\varepsilon,M>0$, a compact rectangle contained in $(0,1)\times(0,\infty)$.
By continuity (Weierstrass), $\mathcal L$ attains its minimum there; consequently any minimizer lies in the open domain $(0,1)\times(0,\infty)$.
\end{proof}

\end{document}